\documentclass{article}

\PassOptionsToPackage{numbers,sort&compress}{natbib}
\usepackage[preprint]{neurips_2026}

\usepackage[utf8]{inputenc}
\usepackage[T1]{fontenc}
\usepackage{hyperref}
\usepackage{url}
\usepackage{booktabs}
\usepackage{amsfonts,amsmath,amssymb,mathtools}
\usepackage{amsthm}
\usepackage{bm}
\usepackage{nicefrac}
\usepackage{microtype}
\usepackage{xcolor}
\usepackage{graphicx}
\usepackage{enumitem}
\usepackage{array}
\usepackage{tikz}
\usetikzlibrary{arrows.meta,positioning,fit,shapes.geometric}
\graphicspath{{figures/}}

\hypersetup{colorlinks=true, linkcolor=blue!60!black, citecolor=blue!60!black, urlcolor=blue!60!black}

\newtheorem{definition}{Definition}
\newtheorem{theorem}{Theorem}
\newtheorem{proposition}{Proposition}
\newtheorem{corollary}{Corollary}

\numberwithin{equation}{section}

\newcommand{\KL}{\operatorname{KL}}
\newcommand{\E}{\mathbb{E}}
\newcommand{\R}{\mathbb{R}}
\newcommand{\dd}{\mathrm{d}}
\newcommand{\Law}{\mathcal{L}}
\newcommand{\Path}{\mathcal{X}}
\newcommand{\pdata}{\pi_{\mathrm{data}}}
\newcommand{\pbase}{\pi_0}
\newcommand{\ptarget}{\pi_1}

\newcommand{\scrR}{\mathsf{R}}
\newcommand{\scrP}{\mathsf{P}}
\newcommand{\scrQ}{\mathsf{Q}}

\newcommand{\eps}{\varepsilon}
\newcommand{\F}{\bm F}
\newcommand{\J}{\bm J}

\newcommand{\Var}{\operatorname{Var}}
\newcommand{\Tr}{\operatorname{Tr}}
\newcommand{\Id}{\operatorname{Id}}

\title{Bridge Graphical Models: Coupling, Projection, and Current-Preserving Dynamics for Generative Modeling}

\author{%
Tiantian ZHANG\\
Columbia University\\
New York, NY, USA\\
\texttt{t.zhang8@columbia.edu}%
}

\begin{document}
\maketitle

\begin{abstract}
Continuous-time generative models are often built from endpoint-conditioned bridges, but generation requires a different object: a non-anticipative Markov decoder that only observes the current state and time. We identify this bridge-to-decoder compression as a structural bottleneck shared by diffusion models, flow matching, rectified flow, Schr\"odinger bridges, and field-based generative models. We introduce the \emph{Markovization gap}, the time-integrated conditional variance of the bridge velocity given the Markov state. It is the MMSE of predicting endpoint-conditioned motion from the information available to a sampler, and it measures an irreducible loss incurred before any neural network is trained. To make this bottleneck comparable across model families, we define \emph{Bridge Graphical Models} (BGMs), which separate endpoint coupling, bridge law, Markovian projection, and current-preserving dynamics representation as independent design choices. The same formalism also represents Poisson and electrostatic models as field-line bridge kernels with a corresponding field-line Markovization gap. Across synthetic, latent, and pixel-space pilots on CIFAR-10 and Fashion-MNIST, a feature-space proxy gap estimated in minutes before training ranks design choices in the same direction as downstream training loss and FID under fixed architecture, bridge, sampler, and compute. These results support the Markovization gap as a pre-training diagnostic for bridge and coupling design.
\end{abstract}

\section{Introduction}

Continuous-time generative modeling is often organized by sampler family. Diffusion models reverse noising Markov chains and are trained through variational bounds \citep{sohldickstein2015nonequilibrium,ho2020ddpm,kingma2021vdm}. Score-based SDEs give reverse-time SDE samplers and probability-flow ODEs \citep{song2021score}. Flow matching and rectified flow learn ODE velocities by regressing conditional paths \citep{lipman2023flow,liu2022rectified,tong2024cfm}. Schr\"odinger bridges optimize entropy-regularized path measures \citep{leonard2014survey,tong2024sf2m,shi2023dsbm,tang2026foundations}. Poisson-flow and electrostatic field-matching methods embed data into an augmented space and follow fields generated by charges or interactions \citep{xu2022pfgm,xu2023pfgmpp,kolesov2025field,manukhov2025ifm,shlenskii2026duality}. This sampler-centered view is useful, but it hides a more basic constraint: the bridge used for training can condition on endpoints, while the sampler used for generation cannot.

This paper studies that compression step. At training time, a bridge may know both the base endpoint and the data endpoint. At generation time, a Markov decoder only sees $(X_t,t)$. If many endpoint pairs pass through similar intermediate states with different velocities, then no Markov decoder can recover the missing endpoint information, regardless of network capacity. This error is not an optimization artifact or a weak-architecture effect; it is a property of the chosen coupling and bridge law. We call the resulting irreducible error the \emph{Markovization gap},
\begin{equation}
    \mathfrak G(\scrQ)
    =
    \int_0^1
    \E_{\scrQ}\bigl[\Tr\,\Var(U_t\mid X_t)\bigr] \,\dd t .
\end{equation}
Equivalently, $\mathfrak G(\scrQ)$ is the time-integrated MMSE of predicting the endpoint-conditioned bridge velocity $U_t$ from the Markov state. It is a process-side information diagnostic: it measures how much endpoint-conditioned local motion survives the Markov bottleneck before neural training begins.

We define \emph{Bridge Graphical Models} (BGMs) to make this bottleneck comparable across methods. A BGM separates four choices: an endpoint coupling, a bridge kernel, a Markovian projection, and a current-preserving dynamics representation. Under this view, diffusion fixes a noisy encoder bridge and learns a reverse Markov decoder; rectified flow fixes a deterministic bridge and learns its Markov projection; Schr\"odinger methods optimize the bridge under an entropic reference; field methods define the bridge through a potential and a hitting map. The decomposition is useful because it turns a qualitative design question into a measurable one: before training, how much of the bridge velocity is unrecoverable from the Markov state?

The same view clarifies the relation to variational generative modeling. A VAE uses an encoder to infer latent variables that make generation easier. Diffusion models can be read as path-space VAEs with a fixed noisy encoder bridge. More general BGMs expose additional design choices: the endpoint coupling is an inference distribution over base-data pairings, the bridge law is a path posterior, and the Markov decoder is the sampler obtained after compressing endpoint-conditioned information into non-anticipative dynamics. Coupling and bridge design are therefore not auxiliary implementation choices; they determine how much information the Markov decoder must reconstruct.

\paragraph{Contributions.}
\begin{enumerate}[leftmargin=2em,nosep]
    \item We identify Markov compression as a structural bottleneck in continuous-time generative modeling and formalize it through Bridge Graphical Models.
    \item We introduce the Markovization gap, an MMSE-style information diagnostic that measures how much endpoint-conditioned bridge velocity is unrecoverable from the Markov state. We prove that it is the floor of the flow-matching loss and that the excess above this floor controls terminal Wasserstein error under standard regularity assumptions.
    \item We formalize Poisson/electrostatic flows as field-line bridge kernels with an associated field-line Markovization gap, placing field-based methods in the same graphical-model language as diffusion and flow methods.
    \item Across synthetic, latent, CIFAR-10, and Fashion-MNIST pilots, we show that a minutes-scale proxy gap ranks coupling and bridge choices in the same direction as downstream training loss and FID under fixed compute.
\end{enumerate}

\paragraph{Why Concept \& Feasibility.}
The scope of BGMs is broader than what can be exhaustively validated in one submission: the framework suggests a common design language for diffusion, flow, bridge, and field-based generative modeling. We therefore focus on the central feasibility claim that the Markovization gap can be estimated before training and can rank bridge/coupling choices that are easier to compress into a Markov decoder. The experiments test this claim across synthetic, latent, and pixel-space settings while keeping architecture, bridge, sampler, and compute fixed within each comparison.

\section{Bridge graphical models}
\label{sec:bgm}

Let $\mathcal M$ be a state space, often $\R^d$ or an augmented space $\R^d\times\R^D$, and let $\Path=C([0,1],\mathcal M)$ with coordinate process $X_t$. Let $\pbase$ be a simple base distribution and $\ptarget=\pdata$ the data distribution.

\begin{definition}[Endpoint coupling]
An endpoint coupling is $\Gamma(\dd z,\dd x)\in\Pi(\pbase,\ptarget)$. Examples include independent coupling, minibatch OT, entropic OT, paired-data coupling, or a learned encoder coupling $\Gamma_\phi(\dd z,\dd x)=q_\phi(\dd z\mid x)\pdata(\dd x)$.
For learned encoders this is an exact element of $\Pi(\pbase,\ptarget)$ only when the aggregated posterior $\int q_\phi(\dd z\mid x)\pdata(\dd x)$ equals $\pbase$; otherwise it should be viewed as an approximate coupling, or the effective base distribution should be taken to be the aggregated encoder marginal.
\end{definition}

\begin{definition}[Bridge kernel and reference path law]
For each endpoint pair $(z,x)$, a bridge kernel is a path-space probability measure $\scrR^{z,x}_\lambda(\dd X_{0:1})$ with $X_0=z$ and $X_1=x$. The aggregated reference path law is
\begin{equation}
    \scrQ_{\Gamma,\lambda}(\dd X_{0:1})
    =\int \scrR^{z,x}_\lambda(\dd X_{0:1})\,\Gamma(\dd z,\dd x).
    \label{eq:reference_path_law}
\end{equation}
\end{definition}

\begin{definition}[Bridge graphical model]
A BGM is a tuple $(\Gamma_\phi,\scrR_\lambda,\scrP_\theta,p_\psi)$: an endpoint inference coupling, a bridge kernel, a non-anticipative Markov path law initialized from $\pbase$, and an optional observation decoder. Let
\begin{equation}
    \scrQ_{\phi,\lambda}(\dd Y_{0:1}\mid x)
    =\int \scrR^{z,x}_\lambda(\dd Y_{0:1})\,q_\phi(\dd z\mid x)
\end{equation}
denote the conditional inference path law induced by the encoder coupling and bridge kernel. Formally,
\begin{equation}
\log p_{\theta,\psi}(x)\ge
\E_{Y_{0:1}\sim\scrQ_{\phi,\lambda}(\cdot\mid x)}\left[
\log p_\psi(x\mid Y_1)-\log\frac{\dd\scrQ_{\phi,\lambda}(\cdot\mid x)}{\dd\scrP_\theta}(Y_{0:1})\right],
\label{eq:path_elbo}
\end{equation}
whenever the inference path law is absolutely continuous with respect to the generative path law and the decoder likelihood is integrable.
\end{definition}

Equation~\eqref{eq:path_elbo} is a continuous-depth VAE: the latent variable is the whole path. It should be read as a unifying variational template, not as a claim that every BGM instance admits a practical exact-likelihood implementation. For exact endpoint-conditioned bridges, the literal path KL on the closed interval $[0,1]$ is generally singular in a continuous state space: the inference bridge enforces $Y_1=x$, whereas an unconstrained Markov decoder assigns zero probability to that point event. A finite variational objective therefore requires either evaluating the path KL on a truncated interval $[0,1-\delta]$ with the last transition absorbed into a terminal likelihood such as $p_\psi(x\mid Y_{1-\delta})$, or relaxing the exact terminal constraint with a non-degenerate observation decoder such as a noisy $p_\psi(x\mid Y_1)$. The exact bridge-matching objectives used below should therefore be read as matching or optimal-control relaxations of the path-space ELBO, not as literal finite-KL objectives on the closed interval.

In shared-covariance stochastic representations, Girsanov's theorem reduces the path KL to an integrated drift-matching cost \citep{oksendal2003sde}. In vanishing-noise deterministic representations, exact absolute continuity typically fails and the operational objective becomes a flow-matching or optimal-control relaxation. In field-line representations, the natural computational object may be a hitting map or a small-noise bridge rather than a tractable path density. Thus the ELBO clarifies the graphical-model factorization, while the concrete training loss depends on the chosen dynamics representation.

Figure~\ref{fig:bgm} shows the resulting computation graph.

\begin{figure}[t]
\centering
\begin{tikzpicture}[
    node distance=0.95cm and 1.18cm,
    box/.style={draw, rounded corners, align=center, minimum height=0.72cm, minimum width=2.35cm, font=\small},
    arrow/.style={-{Latex[length=1.6mm]}, thick}
]
    \node[box] (coupling) {endpoint\\coupling $\Gamma$};
    \node[box, right=of coupling] (bridge) {bridge kernel\\$\scrR^{z,x}_\lambda$};
    \node[box, right=of bridge] (reference) {reference path\\$\scrQ_{\Gamma,\lambda}$};
    \node[box, below=of reference] (projection) {Markov projection\\$v^\star=\E[u_t\mid X_t]$};
    \node[box, left=of projection] (representation) {dynamics representation\\ODE/SDE/field};
    \node[box, left=of representation] (decoder) {Markov decoder\\$\scrP_\theta$};
    \draw[arrow] (coupling) -- (bridge);
    \draw[arrow] (bridge) -- (reference);
    \draw[arrow] (reference) -- (projection);
    \draw[arrow] (projection) -- (representation);
    \draw[arrow] (representation) -- (decoder);
\end{tikzpicture}
\caption{The BGM decomposition. Existing models usually fix several boxes implicitly; the framework makes coupling, bridge, projection, and dynamics representation explicit and therefore comparable.}
\label{fig:bgm}
\end{figure}

\subsection{Current-preserving dynamics representation}
\label{sec:dynamics_representation}

The fourth BGM component is a representation choice rather than a new probability path. Let $\rho_t$ solve $\partial_t\rho_t+\nabla\cdot(\rho_t v_t)=0$ with current $\J_t=\rho_t v_t$. For an SDE $\dd X_t=b_t(X_t)\dd t+\sigma_t(X_t)\dd B_t$ with diffusion covariance $a(x,t)=\sigma_t(x)\sigma_t(x)^\top$, the Fokker--Planck current is
\begin{equation}
    J_{t,i}=\rho_t b_{t,i}-\frac12\sum_j\partial_j(a_{ij}(x,t)\rho_t).
\end{equation}

\begin{theorem}[Current-preserving stochastic realization]
\label{thm:current_realization}
Let $\rho_t$ be positive and sufficiently smooth on the support of interest, and suppose the continuity equation with velocity $v_t$ is well posed with suitable decay or boundary conditions. Given a smooth diffusion matrix $a(x,t)=[a_{ij}(x,t)]\succeq0$, define
\begin{equation}
    b_i^{(a)}(x,t)=v_i(x,t)+\frac{1}{2\rho_t(x)}\sum_j\partial_j(a_{ij}(x,t)\rho_t(x)).
    \label{eq:current_preserving_drift}
\end{equation}
Then the SDE with drift $b^{(a)}$ and diffusion covariance $a(x,t)$ has the same marginal path $\rho_t$ whenever the corresponding Fokker--Planck equation is well posed and has a unique solution. If $a(x,t)=g(t)^2I$, then $b^{(a)}=v_t+\frac{g(t)^2}{2}\nabla\log\rho_t$.
\end{theorem}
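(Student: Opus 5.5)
The plan is to verify that the given path $\rho_t$ solves the Fokker--Planck equation of the SDE with drift $b^{(a)}$ and diffusion covariance $a$. Uniqueness of that equation then identifies the SDE marginals with $\rho_t$. The whole argument rests on the observation that the Fokker--Planck equation is itself a continuity equation, $\partial_t p_t+\nabla\cdot J_t[p_t]=0$, with the current $J_{t,i}[p]=p\,b_{t,i}-\tfrac12\sum_j\partial_j(a_{ij}p)$ introduced just before the theorem. It therefore suffices to show that the Fokker--Planck current evaluated at $p=\rho_t$ equals the transport current $\J_t=\rho_t v_t$.

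\textbf{Step 1 (current identity).} Substitute \eqref{eq:current_preserving_drift} into the Fokker--Planck current at $p=\rho_t$. Because $\rho_t>0$ on the support of interest, the factor $1/\rho_t$ is legitimate and
\begin{equation*}
J_{t,i}[\rho_t]=\rho_t v_i+\tfrac12\sum_j\partial_j(a_{ij}\rho_t)-\tfrac12\sum_j\partial_j(a_{ij}\rho_t)=\rho_t v_i .
\end{equation*}
The two diffusion-correction terms cancel exactly. This is a pointwise algebraic identity, and it uses only enough smoothness of $a$ and $\rho_t$ for the derivatives to exist.

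\textbf{Step 2 (Fokker--Planck equation holds).} Taking the divergence of Step 1 and using the hypothesis that $\rho_t$ solves the continuity equation with velocity $v_t$ gives
\begin{equation*}
\partial_t\rho_t+\nabla\cdot J_t[\rho_t]=\partial_t\rho_t+\nabla\cdot(\rho_t v_t)=0 .
\end{equation*}
So $\rho_t$ is a classical solution of the Fokker--Planck equation with initial datum $\rho_0$, and the stated decay or boundary conditions place it in the solution class where that equation is posed.

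\textbf{Step 3 (identify the SDE marginals).} Let $p_t$ be the law of $X_t$ for the SDE started from $\rho_0$. By It\^o's formula applied to test functions, $p_t$ is a (weak) solution of the same Fokker--Planck equation. The assumed uniqueness then forces $p_t=\rho_t$ for all $t$. I expect this to be the main obstacle in a fully rigorous treatment. One must check that $\rho_t$ and $p_t$ lie in the same uniqueness class, for example finite moments with decay so that the boundary terms in the weak formulation vanish. One must also handle possible blow-up of $b^{(a)}$ where $\rho_t\to0$. The theorem's hypotheses, positivity on the support of interest together with explicit well-posedness and uniqueness, are exactly what absorb these issues. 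I would therefore state them as assumptions rather than prove them.

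\textbf{Step 4 (isotropic case).} If $a=g(t)^2I$, then $\sum_j\partial_j(a_{ij}\rho_t)=g(t)^2\partial_i\rho_t$, because $g$ depends on $t$ only. Hence
\begin{equation*}
b^{(a)}=v_t+\frac{g(t)^2}{2}\frac{\nabla\rho_t}{\rho_t}=v_t+\frac{g(t)^2}{2}\nabla\log\rho_t .
\end{equation*}
This recovers the familiar family of score-corrected SDEs sharing the marginals of the probability-flow ODE.
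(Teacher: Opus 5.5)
Your proposal is correct and takes the same route as the paper's proof. You substitute the drift into the Fokker--Planck current to get $J_{t,i}=\rho_t v_i$, conclude that $\rho_t$ solves the Fokker--Planck equation, invoke uniqueness to identify the SDE marginals, and read off the score correction when $a=g(t)^2I$; your Step 3 only spells out what the paper leaves implicit (It\^o's formula and the uniqueness-class caveat).
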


This identity is standard, but it is useful for separating design choices. The bridge and projection define a probability current; the dynamics representation decides whether that current is sampled as an ODE, an SDE with score correction, or an augmented field process. The theorem is a regularity-dependent representation identity, not a claim that score estimation is automatically well conditioned.

\subsection{Positioning relative to existing unifications}
\label{sec:related}

Stochastic interpolants unify deterministic flows and stochastic diffusions through interpolating processes and probability currents \citep{albergo2023flows,albergo2023stochastic}. Diffusion Flow Matching describes generation as choosing a coupling and a deterministic or stochastic bridge, then learning a Markovian projection \citep{silveri2024dfm}. Generator Matching gives a broad Markov-process view: choose a probability path, construct conditional generators for data-conditioned paths, and learn the marginal infinitesimal generator with scalable Bregman losses \citep{holderrieth2025generator}. Recent statistical notes further develop Markovian generative modeling and generator matching from a probability-path, generator, and stability viewpoint \citep{aamari2026statistical}. Schr\"odinger bridge work studies entropy-minimizing path measures, stochastic control, reciprocal processes, and Markov projections \citep{follmer1988random,leonard2014survey,tang2026foundations}. Recent algorithms relate flow matching, OT flow matching, Schr\"odinger bridge flow matching, DSBM, and proportional Markovian fitting \citep{kim2025unified,kholkin2026ipmf}. Latent stochastic interpolants add a learned encoder/decoder and continuous-time ELBO \citep{singh2025lsi}. Field-matching duality studies when interaction-field models and conditional flow matching describe equivalent dynamics \citep{shlenskii2026duality}.

BGM is compatible with these frameworks, but asks a different diagnostic question. Generator Matching and related Markovian frameworks specify or learn a Markov generator that realizes a marginal probability path. BGM instead starts from the endpoint-coupled bridge law and asks how much endpoint-conditioned information is lost when that bridge is compressed into a non-anticipative Markov decoder. The Markovization gap is not a solver for Schr\"odinger bridges, a replacement for stochastic interpolants, or a competing generator-matching objective. It is a scalar measure of the residual information discarded by the conditional-to-Markov projection step that these objectives often perform implicitly.

\paragraph{Diagnostics for generative models.}
Recent work has also proposed diagnostic quantities for generative modeling rather than new samplers. Jalali et al.~\citep{jalali2023rke} use R\'enyi Kernel Entropy to evaluate diversity and effective mode coverage of generated samples. Our diagnostic is complementary: it does not score the final sample distribution, but measures a process-side bottleneck before training, namely the irreducible error incurred when an endpoint-conditioned bridge is compressed into a Markov decoder.

\begin{table}[t]
\centering
\small
\caption{Positioning relative to active unification directions.}
\label{tab:positioning}
\begin{tabular}{@{}p{0.25\linewidth}p{0.32\linewidth}p{0.36\linewidth}@{}}
\toprule
Framework & Main primitive & BGM addition \\
\midrule
Stochastic interpolants & interpolating process and current & endpoint inference coupling and path-space graphical model \\
DFM / bridge matching & coupling, bridge, Markov projection & projection gap as bridge-selection diagnostic \\
Generator Matching / Markovian GM & probability path and infinitesimal generator & bridge-to-Markov compression gap as a pre-training design diagnostic \\
IPMF / SB fitting & alternating reciprocal and Markov fitting & information-loss score for the Markovization step \\
Latent stochastic interpolants & latent ELBO and encoder/decoder & bridge law and dynamics representation as independent choices \\
PFGM / EFM / IFM & augmented electrostatic/interaction field & hitting-map bridge kernels and field-line Markovization gap \\
RKE-style diagnostics & output-distribution diversity & complementary process-side compressibility diagnostic \\
\bottomrule
\end{tabular}
\end{table}

\section{Markovian projection and gap}
\label{sec:markovization}

Assume $\scrQ$ has a local velocity target $u_t$ in the following deterministic weak sense: $t\mapsto\rho_t=(X_t)_\#\scrQ$ admits a finite vector-valued current $\J_t$ satisfying
\begin{equation}
    \frac{\dd}{\dd t}\int f(x)\rho_t(\dd x)
    =
    \int \nabla f(x)\cdot \J_t(\dd x)
\end{equation}
for smooth compactly supported test functions, and $\J_t$ is absolutely continuous with respect to $\rho_t$ with square-integrable density $u_t$ along the reference paths. For absolutely continuous deterministic bridges, this reduces to $u_t=\partial_tX_t$ for almost every $t$. All conditional expectations below are therefore understood up to $\rho_t$-null sets and $dt$-null time sets.

A Markov ODE decoder $\dd Y_t=v_\theta(Y_t,t)\dd t$ cannot condition on both endpoints; it only sees $(Y_t,t)$. This non-anticipative projection is the population operation behind flow matching and is closely related to classical Markovian mimicking/projection results for It\^o processes \citep{gyongy1986mimicking,brunick2013mimicking}. The assumptions needed for such mimicking results are not cosmetic: in practical generative models, data may concentrate near low-dimensional manifolds, learned fields may be non-Lipschitz, and score estimates may be singular near regions of low density. The gap defined below is therefore a population diagnostic under a chosen regularization, not a guarantee that arbitrary neural samplers are well posed.

\begin{definition}[Markovization gap]
\label{def:gap}
Define the population matching loss
\begin{equation}
    \mathcal L_{\mathrm{FM}}(v)
    :=\int_0^1\E_{\scrQ}\|u_t-v(X_t,t)\|^2\dd t .
    \label{eq:fm_loss_def}
\end{equation}
The Markovization gap is its infimum over Markov vector fields,
\begin{equation}
    \mathfrak G(\scrQ)
    =\inf_v \mathcal L_{\mathrm{FM}}(v).
    \label{eq:markov_gap_def}
\end{equation}
\end{definition}

\begin{proposition}[Projection identity]
\label{prop:projection}
Assume $\int_0^1\E_{\scrQ}\|u_t\|^2\dd t<\infty$. The population minimizer of the matching loss is
\begin{equation}
    v^\star(x,t)=\E_{\scrQ}[u_t\mid X_t=x],
\end{equation}
and
\begin{equation}
    \mathfrak G(\scrQ)=\int_0^1\E_{\scrQ}\Tr\Var(u_t\mid X_t)\dd t .
    \label{eq:gap_variance}
\end{equation}
\end{proposition}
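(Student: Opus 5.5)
The plan is the standard orthogonal-decomposition argument for $L^2$ regression, carried out on the product space $[0,1]\times\Path$. We equip it with the measure $\dd t\otimes\scrQ$ and view $(t,\omega)\mapsto u_t(\omega)$ as an element of $L^2(\dd t\otimes\scrQ;\R^d)$. This membership is exactly the hypothesis $\int_0^1\E_{\scrQ}\|u_t\|^2\dd t<\infty$, together with joint measurability of $u$, which we take as part of the standing weak-velocity assumption. Admissible Markov fields are measurable $v$ with $\int_0^1\E_{\scrQ}\|v(X_t,t)\|^2\dd t<\infty$. For any other $v$ the loss is $+\infty$ by the triangle inequality in $L^2$, so these can be discarded. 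Such fields form the closed subspace of $L^2$ consisting of functions measurable with respect to the $\sigma$-algebra generated by $(t,X_t)$.

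First, we define $v^\star(x,t)=\E_{\scrQ}[u_t\mid X_t=x]$ via a regular conditional distribution of $u_t$ given $X_t$. We then check that it is jointly measurable in $(x,t)$ and square-integrable. Conditional Jensen gives $\|v^\star(X_t,t)\|^2\le\E[\|u_t\|^2\mid X_t]$, and integrating in $t$ yields finiteness. The cleanest route to measurability is to define $v^\star$ as the conditional expectation of $u$ on the product space, given the $\sigma$-algebra generated by $(t,X_t)$. Its $t$-sections then agree with the pointwise conditional expectations for a.e. $t$. This is the step I expect to need the most care, since $t$-by-$t$ conditional expectations need not glue measurably. It is also why the statement is only claimed up to $\rho_t$-null and $\dd t$-null sets.

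Second, for each admissible $v$ we expand
\begin{equation*}
\|u_t-v(X_t,t)\|^2=\|u_t-v^\star(X_t,t)\|^2+2\langle u_t-v^\star(X_t,t),\,v^\star(X_t,t)-v(X_t,t)\rangle+\|v^\star(X_t,t)-v(X_t,t)\|^2 .
\end{equation*}
Conditioning on $X_t$ (the tower property) kills the cross term, because $v^\star-v$ is $X_t$-measurable and $\E[u_t-v^\star(X_t,t)\mid X_t]=0$. The cross term is integrable by Cauchy--Schwarz, since both factors lie in $L^2$. Integrating over $t$ gives the Pythagorean identity
\begin{equation*}
\mathcal L_{\mathrm{FM}}(v)=\mathcal L_{\mathrm{FM}}(v^\star)+\int_0^1\E_{\scrQ}\|v^\star(X_t,t)-v(X_t,t)\|^2\dd t .
\end{equation*}
Hence $v^\star$ attains the infimum in \eqref{eq:markov_gap_def}, and it is the unique minimizer up to $\rho_t\otimes\dd t$-null sets.

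Finally, we compute the minimum value. Since $v^\star(X_t,t)=\E[u_t\mid X_t]$, we have
\begin{equation*}
\E\bigl[\|u_t-\E[u_t\mid X_t]\|^2\mid X_t\bigr]=\Tr\Var(u_t\mid X_t),
\end{equation*}
because the trace of the conditional covariance is the sum of the coordinatewise conditional variances. Taking expectations and integrating in $t$ gives \eqref{eq:gap_variance}.
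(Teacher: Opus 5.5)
Your proposal is correct and follows the same route as the paper's proof in Appendix~\ref{app:projection}: treat $\E[u_t\mid X_t]$ as the $L^2$ orthogonal projection onto $\sigma(X_t)$-measurable functions, expand the square, drop the cross term by measurability, and integrate in $t$. Your extra care fills in details that the paper's $t$-by-$t$ argument leaves implicit. Working on $[0,1]\times\Path$ under $\dd t\otimes\scrQ$ gives a jointly measurable $v^\star$, and restricting to square-integrable $v$ makes the cross term integrable.
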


The gap depends on the bridge/coupling rather than on neural capacity. Low gap means that the chosen endpoint-conditioned bridge is almost Markov after observing $X_t$; high gap means that multiple endpoint configurations induce conflicting velocities at the same location and time. It is not a replacement for approximation, optimization, or sampler-error analysis: it isolates only the irreducible compression error created before a finite neural function class is chosen.

\subsection{Information-theoretic view: the Markov bottleneck}
\label{sec:markov_bottleneck}

The Markovization gap can be interpreted as an information bottleneck quantity, but in a task-specific squared-loss sense rather than as a generic Shannon entropy. Let $H=(Z,X)$ denote the endpoint information used to define the bridge and let $S_t=(X_t,t)$ denote the information available to a Markov sampler at time $t$. The bridge velocity $U_t$ may depend on the hidden endpoint information $H$, whereas the sampler can only use $S_t$. The Markov projection
\begin{equation}
    v^\star(S_t)=\E[U_t\mid S_t]
\end{equation}
is the Bayes estimator of the bridge velocity under the information restriction imposed by sampling. The Markovization gap is exactly the time-integrated MMSE of this bottleneck:
\begin{equation}
    \mathfrak G(\scrQ)
    =
    \int_0^1
    \E\|U_t-\E[U_t\mid S_t]\|^2\,\dd t .
\end{equation}

This separates two questions that are often conflated. A bridge may define a smooth or geometrically appealing path, but if the current state $S_t$ does not retain enough information to infer the hidden endpoint-conditioned velocity, the bridge is hard to realize by a Markov decoder. Conversely, a low-gap bridge is one for which the Markov state is nearly sufficient for the local velocity. Thus $\mathfrak G$ measures neither path length nor sample quality directly. It measures the information about endpoint-conditioned motion that survives the Markov bottleneck.

For comparisons within a fixed feature space and estimator, one may also report the normalized unexplained velocity fraction
\begin{equation}
    \eta(\scrQ)
    =
    \frac{
    \int_0^1 \E\Tr\Var(U_t\mid S_t)\,\dd t
    }{
    \int_0^1 \E\Tr\Var(U_t)\,\dd t
    } .
\end{equation}
Small $\eta$ means that the Markov state explains most of the bridge velocity variance; large $\eta$ means that much of the bridge information remains hidden in the endpoints. In conditionally Gaussian settings, the residual covariance also upper bounds the conditional entropy of the unexplained velocity up to the usual Gaussian maximum-entropy relation. We therefore view the Markovization gap as a process-side information diagnostic specialized to the prediction problem that a Markov generative sampler must solve.

The next statement is the standard coupling, Cauchy--Schwarz, and Gronwall stability argument used in analyses of deterministic flow matching and stochastic interpolants \citep{benton2023error,albergo2023flows}; here we use it only to separate the irreducible projection floor $\mathfrak G(\scrQ)$ from the excess approximation error $\epsilon^2$.

\begin{theorem}[Terminal stability from projection error]
\label{thm:stability}
Let $\rho_t$ be transported by $v^\star$, and let $\rho^\theta_t$ be transported by $v_\theta$ from the same initial law. Assume both ODEs admit unique absolutely continuous solutions on $[0,1]$, $v_\theta(\cdot,t)$ is uniformly $L$-Lipschitz, and
\begin{equation}
    \epsilon^2=\int_0^1\E_{X_t\sim\rho_t}\|v_\theta(X_t,t)-v^\star(X_t,t)\|^2\dd t<\infty .
\end{equation}
Then
\begin{equation}
    W_2(\rho^\theta_1,\rho_1)\le e^L\epsilon .
    \label{eq:stability_bound}
\end{equation}
Moreover the excess matching loss equals the squared projection error:
$\mathcal L_{\mathrm{FM}}(v_\theta)-\mathfrak G(\scrQ)=\epsilon^2$.
\end{theorem}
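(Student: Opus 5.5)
We will prove the two claims separately, starting with the synchronous coupling bound. Let $X_0=Y_0\sim\rho_0$ be a single draw from the common initial law. Run $\dot X_t=v^\star(X_t,t)$ and $\dot Y_t=v_\theta(Y_t,t)$ from this shared point, so that $X_t\sim\rho_t$ and $Y_t\sim\rho^\theta_t$. The pair $(Y_1,X_1)$ is then a coupling of $(\rho^\theta_1,\rho_1)$, and it gives
\[
W_2(\rho^\theta_1,\rho_1)\le \bigl(\E\|Y_1-X_1\|^2\bigr)^{1/2}.
\]
Set $\Delta_t=Y_t-X_t$ and split its derivative as
\[
\dot\Delta_t=\bigl[v_\theta(Y_t,t)-v_\theta(X_t,t)\bigr]+\bigl[v_\theta(X_t,t)-v^\star(X_t,t)\bigr].
\]
Absolute continuity of $t\mapsto\Delta_t$ and the Lipschitz bound on $v_\theta$ then give
\[
\|\Delta_t\|\le \int_0^t L\|\Delta_s\|\,\dd s+\int_0^t e(s)\,\dd s,
\qquad e(s)=\|v_\theta(X_s,s)-v^\star(X_s,s)\|.
\]

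Next we apply Gronwall pathwise. This yields $\|\Delta_1\|\le e^L\int_0^1 e(s)\,\dd s$, and Cauchy--Schwarz in time on $[0,1]$ gives $\|\Delta_1\|^2\le e^{2L}\int_0^1 e(s)^2\,\dd s$. Taking expectations, and using $X_s\sim\rho_s$ so that $\E\int_0^1 e(s)^2\,\dd s=\epsilon^2$ by Tonelli, we get $\E\|\Delta_1\|^2\le e^{2L}\epsilon^2$. This is \eqref{eq:stability_bound}.

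The main obstacle is this pathwise Gronwall step. We must work with the norm $\|\Delta_t\|$, which need not be differentiable where $\Delta_t=0$. We plan to avoid this by working with the integral inequality above rather than differentiating the norm. An alternative is to differentiate $\sqrt{\|\Delta_t\|^2+\delta}$ and let $\delta\to0$. Another point to check is that the Lipschitz constant is needed only for $v_\theta$ and not for $v^\star$, because the error is evaluated along the $v^\star$-trajectory. This is exactly why the hypothesis is placed on $v_\theta$, and why $\epsilon$ is defined under $\rho_t$.

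For the loss identity, we invoke Proposition~\ref{prop:projection}. Decompose
\[
u_t-v_\theta=(u_t-v^\star)+(v^\star-v_\theta)
\]
with all terms evaluated at $X_t$ under $\scrQ$, and expand the square. The cross term is
\[
\E\bigl[(u_t-v^\star(X_t,t))\cdot(v^\star-v_\theta)(X_t,t)\bigr].
\]
It vanishes by the tower property, since the second factor is $X_t$-measurable and $\E[u_t-v^\star\mid X_t]=0$. Here square integrability of $v_\theta-v^\star$ under $\rho_t$, which holds because $\epsilon<\infty$, together with that of $u_t$, justifies the conditioning. Integrating in $t$ then gives
\[
\mathcal L_{\mathrm{FM}}(v_\theta)=\mathfrak G(\scrQ)+\epsilon^2 .
\]
We also note that $\rho_t=(X_t)_\#\scrQ$ coincides with the law transported by $v^\star$, by the uniqueness assumption and the fact that $v^\star$ generates the current $\J_t$. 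This is what ensures that the expectation defining $\epsilon$ and the one appearing in $\mathcal L_{\mathrm{FM}}$ are taken under the same marginals.
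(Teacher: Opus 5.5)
Your proof is correct and follows the paper's argument essentially step for step. The shared steps are: synchronous coupling from a common initial draw, the Lipschitz split evaluated along the $v^\star$-trajectory, integral-form Gronwall, Cauchy--Schwarz in time with the coupling bound on $W_2$, and the orthogonal decomposition of Proposition~\ref{prop:projection} (cross term removed by the tower property) giving $\mathcal L_{\mathrm{FM}}(v_\theta)-\mathfrak G(\scrQ)=\epsilon^2$. Your closing remark that $\rho_t$ must coincide with $(X_t)_\#\scrQ$ is a point the paper leaves implicit and is worth keeping. Strictly, going from pathwise ODE uniqueness to uniqueness for $\partial_t\rho_t+\nabla\cdot(\rho_t v^\star)=0$ needs a superposition-principle argument rather than following directly from the stated hypothesis.
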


Thus $\mathfrak G(\scrQ)$ is a floor for the bridge-matching loss, while the part of the regression loss above this floor controls marginal error under the stated population assumptions. The theorem deliberately separates a bridge-design quantity from finite-sample learning and numerical-integration errors.

\section{Empirical diagnostics and end-to-end compressibility}
\label{sec:toy}

We test the prediction that lower estimated Markovization gap identifies bridge/coupling choices that are easier to compress into a Markov decoder. Each comparison keeps the bridge, architecture, sampler, and compute fixed, so that the measured differences can be attributed to the BGM design choice under study rather than to model scale or tuning.

\paragraph{Synthetic diagnostic.}
Base samples are drawn from $\mathcal N(0,I_2)$; target samples are an eight-Gaussian ring. The bridge is fixed to $X_t=(1-t)Z+tX$, $u_t=X-Z$. We compare independent endpoint pairing to minibatch OT assignment. At 19 times in $[0.05,0.95]$, $\E\Tr\Var(u_t\mid X_t)$ is estimated by a nearest-neighbor conditional-variance estimator; this finite-sample estimate is used only as a diagnostic score, not as an exact lower bound.

\begin{figure}[t]
\centering
\begin{minipage}[t]{0.53\linewidth}
\centering
\includegraphics[width=\linewidth]{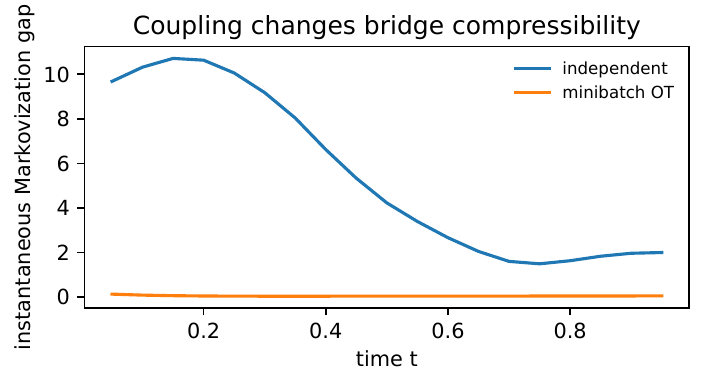}
\end{minipage}\hfill
\begin{minipage}[t]{0.42\linewidth}
\centering
\includegraphics[width=\linewidth]{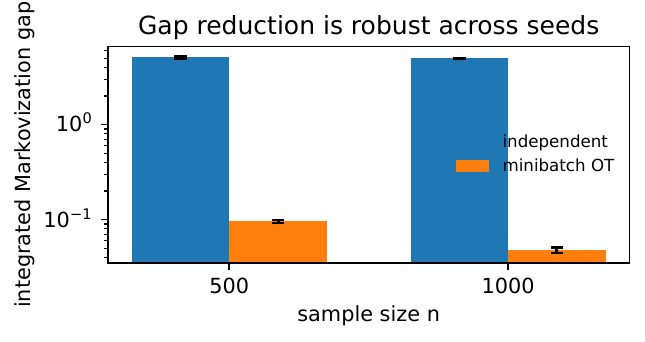}
\end{minipage}
\caption{Synthetic Markovization-gap diagnostics. Left: one run with $n=1000$ gives integrated gap $4.877$ for independent pairing and $0.038$ for minibatch OT. Right: five-seed replication at $n\in\{500,1000\}$; OT consistently lowers the irreducible compression floor by about two orders of magnitude.}
\label{fig:toy}
\end{figure}

\paragraph{End-to-end latent Markov decoder.}
We next test end-to-end compressibility: after choosing the coupling and straight bridge, we train the same neural Markov decoder and evaluate terminal samples produced by integrating its ODE. We use two low-compute real-data latent tasks: scikit-learn handwritten digits and the no-download LFW face subset bundled with scikit-image. Images are embedded into 16-dimensional PCA-whitened latents, paired with standard normal base samples, and used to train the same MLP velocity model for 500 gradient steps. We then generate fresh terminal latents with explicit Euler integration and compare them with held-out target latents. This tests whether lower Markovization gap predicts easier end-to-end Markov compression under fixed architecture and compute.

\begin{table}[t]
\centering
\small
\caption{End-to-end latent Markov-decoder benchmark, mean $\pm$ standard deviation over three seeds. The ODE is sampled with 32 Euler steps. MMD is a biased RBF latent two-sample metric with the median-distance bandwidth heuristic; 1NN accuracy is ideal near $0.5$. Lower is better except for the interpretive target of 1NN accuracy. The gap column is a finite-sample $k$NN plug-in estimate, so it should not be compared as an exact lower bound to the neural MSE column.}
\label{tab:end_to_end_benchmark}
\begin{tabular}{@{}llcccc@{}}
\toprule
Dataset & Coupling & estimated gap AUC $\downarrow$ & neural eval MSE $\downarrow$ & terminal MMD$^2$ $\downarrow$ & 1NN acc. \\
\midrule
Digits & independent & $24.530\pm2.043$ & $21.503\pm2.187$ & $0.01101\pm0.00372$ & $0.821\pm0.005$ \\
Digits & minibatch OT & $12.068\pm1.637$ & $7.178\pm0.787$ & $0.00536\pm0.00149$ & $0.834\pm0.013$ \\
LFW & independent & $23.701\pm0.880$ & $10.265\pm0.133$ & $0.05628\pm0.00511$ & $0.890\pm0.032$ \\
LFW & minibatch OT & $11.871\pm0.201$ & $3.149\pm0.272$ & $0.03074\pm0.00683$ & $0.894\pm0.010$ \\
\bottomrule
\end{tabular}
\end{table}

Table~\ref{tab:end_to_end_benchmark} supports the use of $\mathfrak G$ as a compressibility diagnostic, with the finite-sample qualification discussed below. Under the same bridge, architecture, optimizer, and sampling budget, minibatch OT roughly halves the estimated gap, reduces velocity-regression error by about $67\%$--$69\%$, and reduces terminal latent MMD by about $45\%$--$51\%$. The 1NN two-sample statistic remains far from $0.5$, which shows that this benchmark is not a visual-quality test. It nevertheless verifies the quantity targeted by the theory: lower estimated gap predicts a lower-loss Markov decoder and better latent distribution matching under fixed compute.

The gap column is a $k$NN-smoothed plug-in score, not a certified lower bound for the finite empirical regression loss. In 16-dimensional PCA space with small $n$, exact empirical bridge paths rarely collide, while nearest-neighbor neighborhoods are broad; the estimator can therefore oversmooth the conditional mean and partially measure marginal velocity variance or transport cost. 

\paragraph{CIFAR-10 proxy-gap pilot across coupling and bridge choices.}
Finally, we test whether the diagnostic can be used before image-scale training. We estimate a low-dimensional feature-space proxy for the Markovization gap on CIFAR-10 using $n=1024$ samples, 64 PCA dimensions, $k=32$ nearest neighbors, 19 time points in $[0.05,0.95]$, and three seeds. The proxy is computed before neural training and should not be interpreted as a certified pixel-space lower bound. We vary both BGM design axes that are relevant to the matching target: endpoint coupling (independent versus minibatch OT) and bridge law (straight deterministic versus Brownian with variance parameter $\epsilon=0.5$). Because the Brownian rows are diagnostic-only, we separate the pre-training 2$\times$2 proxy table from the downstream straight-bridge image pilot.

\begin{table}[t]
\centering
\small
\caption{CIFAR-10 2$\times$2 proxy Markovization gap: varying both bridge and coupling. The proxy gap is estimated on CPU in ${\sim}9$ min per configuration before any neural training in a 64-dimensional PCA feature space. It is a design diagnostic rather than a certified pixel-space lower bound.}
\label{tab:2x2_gap}
\begin{tabular}{@{}llc@{}}
\toprule
Bridge & Coupling & Proxy gap $\downarrow$ \\
\midrule
straight & independent & $3.25\pm0.02$ \\
straight & minibatch OT & $2.34\pm0.01$ \\
Brownian & independent & $3.31\pm0.02$ \\
Brownian & minibatch OT & $2.38\pm0.01$ \\
\bottomrule
\end{tabular}
\end{table}

For the straight bridge, we also run 20k-step pixel-space flow-matching pilots on CIFAR-10 and Fashion-MNIST with the same compact U-Net architecture, Euler sampler, and compute budget within each dataset. Table~\ref{tab:pixel_pilots} reports the downstream metrics. We do not include Brownian image-model training; the Brownian rows in Table~\ref{tab:2x2_gap} show that the proxy ranking can be evaluated across bridge laws before spending GPU time.

\begin{table}[t]
\centering
\small
\caption{Pixel-space downstream pilots for the straight bridge. Within each dataset, only the endpoint coupling differs. The lower proxy-gap coupling has lower final training loss and lower FID in both pilots; Fashion-MNIST also shows lower KID. Absolute proxy-gap values are not comparable across datasets because feature scales differ.}
\label{tab:pixel_pilots}
\begin{tabular}{@{}llccccc@{}}
\toprule
Dataset & Coupling & Proxy gap $\downarrow$ & Train loss $\downarrow$ & FID $\downarrow$ & KID $\downarrow$ & IS $\uparrow$ \\
\midrule
CIFAR-10 & independent & $3.25\pm0.02$ & $0.182$ & $50.48$ & $0.0454$ & $5.34$ \\
CIFAR-10 & minibatch OT & $2.34\pm0.01$ & $0.168$ & $49.17$ & $0.0470$ & $5.38$ \\
Fashion-MNIST & independent & $210.03\pm1.55$ & $0.185$ & $30.14$ & $0.0243$ & $3.84$ \\
Fashion-MNIST & minibatch OT & $173.65\pm0.94$ & $0.138$ & $21.96$ & $0.0148$ & $3.95$ \\
\bottomrule
\end{tabular}
\end{table}

Table~\ref{tab:2x2_gap} adds one bridge-family comparison to the coupling ablation without leaving downstream-metric entries blank. Minibatch OT has lower estimated proxy gap than independent coupling under both the straight and Brownian bridge laws. In the trained straight-bridge rows of Table~\ref{tab:pixel_pilots}, the lower proxy-gap design also has lower final training loss and lower FID on both CIFAR-10 and Fashion-MNIST. CIFAR-10 KID and Inception Score are inconclusive at this pilot scale, while Fashion-MNIST shows a cleaner downstream signal. The Brownian rows test a different Concept \& Feasibility point: the diagnostic can be evaluated along a second bridge-law axis before running another GPU training experiment. We therefore claim rank agreement between the proxy diagnostic and the trained pilot rows, not a statistically meaningful global correlation across heterogeneous proxy scales. 
Appendix~\ref{app:additional_diagnostics} reports normalized proxy-gap controls and estimator robustness checks. Appendix~\ref{app:fieldline_crossing_diagnostic} gives a controlled field-line crossing diagnostic showing that augmentation can remove a data-space Markovization gap by retaining branch information.

\section{Field-line bridge kernels}
\label{sec:field}

Let $\Omega\subset\R^m$ have source and target hypersurfaces $S_0,S_1$. Let $\Phi:\Omega\to\R$ be a potential and $\F=-\nabla\Phi$ its field. Let $\varphi_s(z)$ solve $\dot\varphi_s=\F(\varphi_s)$, $\varphi_0=z$. Define the hitting time and hitting map
\begin{equation}
    \tau(z)=\inf\{s>0:\varphi_s(z)\in S_1\},\qquad T_\Phi(z)=\varphi_{\tau(z)}(z).
\end{equation}
This construction is used only on a regular set where $\F$ is locally Lipschitz away from singular charges, the trajectory exists until hitting $S_1$, $\tau(z)<\infty$, and the hit is transversal. These conditions can fail near charge singularities, caustics, or field-line crossings, and in such regions the deterministic hitting map must be restricted, regularized, or replaced by a small-noise bridge.

After choosing a clock $r_z:[0,1]\to[0,\tau(z)]$, set $\gamma_z(t)=\varphi_{r_z(t)}(z)$ and $u_t(z)=\partial_t\gamma_z(t)$.

\begin{definition}[Field-line bridge]
Given a source measure $\mu_0$ on $S_0$ supported on the regular hitting set, the field-induced coupling and bridge are
\begin{equation}
    \Gamma_\Phi=(\Id,T_\Phi)_\#\mu_0,
    \qquad
    \scrR^{z,x}_{\Phi}=\delta_{\gamma_z},\quad x=T_\Phi(z).
    \label{eq:field_bridge}
\end{equation}
\end{definition}

This includes PFGM, PFGM++, electrostatic field matching, and interaction-field matching in the same augmented-space bridge-law formalism. A regularized version uses $\dd Y_s=\F(Y_s)\dd s+\sqrt{2\eps}\dd B_s$ and the Doob bridge $\Law(Y_{0:1}\mid Y_0=z,Y_1=x)$. The deterministic bridge can then be interpreted as a vanishing-noise idealization in regular regions, but the clock matters: a fixed-time small-noise bridge need not recover the same geometric field line unless the fixed-time action minimizer is compatible with the chosen field-line clock.

\begin{theorem}[Field-line Markovization gap]
\label{thm:field_line_gap}
Assume the regular hitting-map conditions above and $\int_0^1\E\|u_t(Z)\|^2\dd t<\infty$. For $X_t=\gamma_Z(t)$, $Z\sim\mu_0$, the marginals $\rho_t=(\gamma_t)_\#\mu_0$ satisfy a weak continuity equation with vector-valued current $\J_t=(\gamma_t)_\#(u_t\mu_0)$. The Markov velocity $v^\star(y,t)=\E[u_t(Z)\mid\gamma_Z(t)=y]$ realizes the same one-time marginals, and the irreducible field-to-flow compression loss is
\begin{equation}
    \mathfrak C_\Phi=\int_0^1\E\Tr\Var(u_t(Z)\mid\gamma_Z(t))\dd t.
\end{equation}
It is zero exactly when the field-line velocity is a function of the current point-time a.s.
\end{theorem}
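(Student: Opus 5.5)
The plan is to reduce Theorem~\ref{thm:field_line_gap} to the deterministic-bridge case of Section~\ref{sec:markovization}. Under the regular hitting-map conditions, the field-line bridge~\eqref{eq:field_bridge} gives a path law $\scrQ_\Phi=(\gamma_\cdot)_\#\mu_0$ concentrated on absolutely continuous curves. Along each curve the local velocity target is $u_t(Z)=\partial_t\gamma_Z(t)$. Once this is in place, the gap formula follows from Proposition~\ref{prop:projection} with $U_t=u_t(Z)$ and $X_t=\gamma_Z(t)$. The work therefore splits into three parts: the weak continuity equation for $\rho_t$, the fact that the Markov projection reproduces the marginals, and the identification and characterisation of $\mathfrak C_\Phi$.

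\emph{Step 1 (continuity equation).} Fix a test function $f\in C_c^\infty$. On the regular set, $t\mapsto\gamma_z(t)=\varphi_{r_z(t)}(z)$ is absolutely continuous, provided the clock $r_z$ is absolutely continuous. Hence $f(\gamma_z(t))-f(\gamma_z(0))=\int_0^t\nabla f(\gamma_z(s))\cdot u_s(z)\,\dd s$. Integrate this against $\mu_0$ and use Fubini, which is justified by $\int_0^1\E\|u_t\|^2\dd t<\infty$ together with boundedness of $\nabla f$. This gives $\int f\,\dd\rho_t-\int f\,\dd\rho_0=\int_0^t\int\nabla f\cdot \dd\J_s\,\dd s$ with $\J_s=(\gamma_s)_\#(u_s\mu_0)$, which is the weak continuity equation. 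Next, disintegrate $\mu_0$ along the map $\gamma_t$. Then $\J_t$ is absolutely continuous with respect to $\rho_t$, with density $v^\star(\cdot,t)=\E[u_t(Z)\mid\gamma_Z(t)=\cdot]$. Jensen gives $\int\|v^\star\|^2\dd\rho_t\le\E\|u_t\|^2$, so the density is square-integrable.

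\emph{Step 2 (same marginals).} Because $\J_t=\rho_t v^\star_t$, the curve $\rho_t$ solves $\partial_t\rho_t+\nabla\cdot(\rho_t v^\star_t)=0$ weakly. So $\rho_t$ is a marginal path transported by the Markov field $v^\star$. Statements that the flow of $v^\star$ actually realizes $\rho_t$ need uniqueness, i.e.\ the same well-posedness caveat as in Theorem~\ref{thm:current_realization}. I would state it as: every weak solution of the continuity equation with field $v^\star$ starting at $\rho_0$ coincides with $\rho_t$, assuming uniqueness. Alternatively, one can invoke the superposition principle to say that the one-time marginals are realized.

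\emph{Step 3 (gap and zero case).} Apply Proposition~\ref{prop:projection} directly to $\scrQ_\Phi$. This shows that $v^\star$ minimizes $\mathcal L_{\mathrm{FM}}$ and that $\mathfrak G(\scrQ_\Phi)=\int_0^1\E\Tr\Var(u_t(Z)\mid\gamma_Z(t))\,\dd t=\mathfrak C_\Phi$. For the zero characterisation, $\mathfrak C_\Phi=0$ holds iff $\Tr\Var(u_t\mid\gamma_Z(t))=0$ a.s.\ for a.e.\ $t$. That is equivalent to $u_t(Z)=\E[u_t\mid\gamma_Z(t)]=v^\star(\gamma_Z(t),t)$ a.s., i.e.\ the velocity is a measurable function of (point, time). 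The converse is immediate: if $u_t(Z)=h(\gamma_Z(t),t)$, then the conditional variance vanishes. Both directions hold up to $\dd t$- and $\rho_t$-null sets, as in the paper's convention.

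The main obstacle is Step 2, specifically making "realizes the same one-time marginals" rigorous. The field $v^\star$ is only $L^2(\rho_t)$, and it may be irregular where field lines from different $z$ meet at different speeds under different clocks. I would restrict to the regular set and lean on the superposition principle (Ambrosio--Gigli--Savar\'e) rather than on pathwise ODE uniqueness. Under the well-posedness assumption used elsewhere in the paper, that yields uniqueness of the marginal curve.
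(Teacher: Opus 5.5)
Your proposal is correct and follows the paper's proof. The paper likewise differentiates $\int f(\gamma_z(t))\mu_0(\dd z)$ to get the weak continuity equation with $\J_t=(\gamma_t)_\#(u_t\mu_0)$, disintegrates $\mu_0$ along $\gamma_t$ to identify $\J_t=\rho_t v^\star$, and applies Proposition~\ref{prop:projection} with $X_t=\gamma_Z(t)$, $u_t=u_t(Z)$. Your extra pieces are refinements of the same route rather than a different argument: the Fubini and absolute-continuity justification, the explicit uniqueness or superposition caveat for ``realizes the same marginals,'' and the direct if-and-only-if argument for the zero case (the paper only remarks that injectivity of $\gamma_t$ suffices and that crossings with distinct velocities give positive variance).
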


The field-line gap should be read as a diagnostic for information loss at projection time. If an augmented field representation separates trajectories that collide after projection to data space, then an augmented sampler may retain information that a data-space Markov decoder loses. Conversely, if the hitting map and clock are injective enough that current point-time identifies the field line, the gap vanishes.

\section{Instantiations}
\label{sec:instances}

\begin{table}[t]
\centering
\small
\caption{Several generative model families as BGM choices.}
\label{tab:instances}
\begin{tabular}{@{}p{0.19\linewidth}p{0.33\linewidth}p{0.20\linewidth}p{0.20\linewidth}@{}}
\toprule
Family & Coupling / bridge & Learned object & Dynamics representation \\
\midrule
DDPM / score SDE & fixed Gaussian noising bridge; path-ELBO training & score or denoiser & SDE; probability-flow ODE optional \\
Flow matching & conditional probability path & velocity field & ODE \\
Rectified flow & straight bridge $X_t=(1-t)Z+tX$ & $\E[X-Z\mid X_t]$ & ODE \\
Brownian / DFM & Brownian bridge over endpoints & projected drift & SDE \\
Schr\"odinger bridge & entropy-regularized Brownian-reference path & forward/backward drifts & SDE / reciprocal diffusion \\
PFGM / EFM / IFM & hitting-map field-line bridge & field or interaction field & augmented field realization \\
\bottomrule
\end{tabular}
\end{table}

The table should be read componentwise: two methods may share a dynamics representation but differ in bridge law, or share a bridge law but differ in coupling. For Gaussian paths $X_t=\alpha_tX+\sigma_t\eps$, $\eps\sim\mathcal N(0,I)$, Tweedie conditioning gives $\E[\eps\mid X_t=x]=-\sigma_t\nabla\log\rho_t(x)$, and
\begin{equation}
    v^\star(x,t)=\frac{\dot\alpha_t}{\alpha_t}x+
    \left(\frac{\dot\alpha_t\sigma_t^2}{\alpha_t}-\dot\sigma_t\sigma_t\right)\nabla\log\rho_t(x).
\end{equation}
Thus score, noise, denoiser, and velocity prediction are schedule-dependent reparameterizations of the same conditional sufficient statistic under the chosen Gaussian path. Outside this Gaussian setting, these parameterizations may differ in conditioning, numerical stability, and estimation error even when they target the same marginal current.

\section{Design consequences and scope}
\label{sec:limitations}

The experiments support a practical use of the framework: choose the endpoint coupling and bridge law as inference-side design choices, then estimate whether the induced bridge is easy to compress into a Markov decoder before running full training. Independent coupling leaves endpoint alignment to the generator, whereas OT or encoder-induced couplings move part of that alignment into the reference path law. The dynamics representation is a separate decision made after a probability current is specified: the same current can be realized as an ODE, an SDE with score correction, or an augmented field process.

The evidence is targeted rather than exhaustive. The image experiments use 20k-step pilots, feature-space proxy gaps, limited seeds, and modest architectures; they test design ranking and Markov-compression difficulty under controlled budgets. The theory is population-level and assumes regularity of flows, currents, and hitting maps. Exact path-space ELBOs are singular for point-conditioned bridges unless one uses truncation or noisy terminal decoders, and high-dimensional estimation of $\mathfrak G$ remains an open problem because nearest-neighbor plug-in scores can oversmooth and partially measure transport cost. Larger multi-bridge studies and stronger conditional-mean estimators are the next steps.

\clearpage
{\small
\bibliographystyle{plainnat}
\bibliography{references}
}

\appendix
\section{Formal path-space setup}
\label{app:setup}

Let $(\mathcal M,\mathcal B)$ be a Polish state space and $\Path=C([0,1],\mathcal M)$ with the Borel sigma-algebra induced by the uniform topology. The coordinate process is $X_t(\omega)=\omega(t)$ and the endpoint map is $e=(X_0,X_1)$. A bridge kernel is a probability kernel $(z,x)\mapsto\scrR^{z,x}$ such that $\scrR^{z,x}(e^{-1}(z,x))=1$. If $\scrQ$ is any path measure with endpoint law $\Gamma=e_\#\scrQ$, then by disintegration on Polish spaces there exists a bridge kernel satisfying
\begin{equation}
    \scrQ(\dd\omega)=\int \scrR^{z,x}(\dd\omega)\Gamma(\dd z,\dd x).
\end{equation}
Conversely, any coupling $\Gamma$ and bridge kernel define the reference path law in Eq.~\eqref{eq:reference_path_law}. This justifies treating the coupling and bridge as separate graphical-model components.

For deterministic bridges, assume paths are absolutely continuous and that $\int_0^1\E\|u_t\|^2\dd t<\infty$, where $u_t=\partial_tX_t$. Then the vector-valued measure $\J_t(A)=\E[u_t\mathbf 1\{X_t\in A\}]$ is finite for almost every $t$. The phrase ``deterministic weak sense'' means that this current satisfies the weak continuity equation
\begin{equation}
    \int_0^1\int \left(\partial_t\varphi(x,t)+u_t\cdot\nabla\varphi(x,t)\right)\rho_t(\dd x)\dd t=0
\end{equation}
for smooth compactly supported test functions $\varphi$ after including the usual endpoint terms. Equivalently, the current is absolutely continuous with respect to $\rho_t\dd t$ and has density $u_t$. For stochastic bridges, the analogous object is the weak probability current; when an It\^o drift $\beta_t$ exists, the Markov projection replaces $u_t$ by $\beta_t$.

\section{Assumption dictionary and failure modes}
\label{app:assumptions}

This appendix collects the regularity assumptions used implicitly in the main text and explains where they may fail in practical generative modeling.

\paragraph{Projection identity.}
Proposition~\ref{prop:projection} is an $L^2$ Hilbert-space projection statement. It requires only square integrability of the local target and existence of the conditional expectation. It does not require smooth densities, neural approximation, or a unique ODE solver. Its limitation is that it describes a population target under the reference path law; it does not say that a finite sample estimator, a finite network, or a numerical sampler will recover that target.

\paragraph{Markovian mimicking.}
Classical Markovian projection theorems require additional hypotheses on semimartingale characteristics, measurability, growth, and well-posedness. These conditions can be delicate when the learned drift is discontinuous, when the score is poorly estimated near low-density regions, or when the data distribution is concentrated near a lower-dimensional manifold. In BGM, mimicking results are used as motivation for Markov projection, while the Markovization gap itself remains a conditional-variance diagnostic.

\paragraph{Stability.}
Theorem~\ref{thm:stability} assumes unique ODE solutions and a uniform Lipschitz bound for the learned velocity. This is stronger than what is usually verified for neural vector fields in high-dimensional practice. The theorem should therefore be interpreted as a clean population sensitivity bound: if the projected velocity is approximated well along the reference marginal path and the sampler is regular, then terminal error is controlled.

\paragraph{Fokker--Planck current representation.}
The current-preserving realization theorem assumes sufficient smoothness of $\rho_t$ and $a_t$, positivity of $\rho_t$ on the region where the score correction is used, and uniqueness of the Fokker--Planck equation. Boundary conditions also matter. On bounded domains one needs no-flux or compatible boundary behavior; on unbounded domains one needs decay sufficient for integration by parts. These assumptions are often approximated rather than exactly satisfied in neural samplers.

\paragraph{Field-line bridges.}
The hitting map $T_\Phi$ is well behaved only on regular parts of the source surface. It may be undefined when trajectories hit singular charges, fail to reach the target surface, graze the target tangentially, or merge at caustics. The field-line gap remains meaningful after restricting to a full-measure regular set or after adding small noise, but a deterministic global hitting-map theorem is not claimed.

\paragraph{Path-space ELBO.}
The ELBO in Eq.~\eqref{eq:path_elbo} requires absolute continuity of the inference path law with respect to the generative path law. This holds naturally for shared-covariance SDEs under Girsanov assumptions, but fails for many deterministic bridges and singular field-line laws. It also fails for exact point-conditioned stochastic bridges on the closed interval $[0,1]$: the inference law terminates at a specified point while the unconstrained generative diffusion has a continuous terminal marginal. Rigorous implementations should either truncate the KL to $[0,1-\delta]$ and absorb the final transition into $p_\psi(x\mid Y_{1-\delta})$, or relax the endpoint with a non-degenerate noisy terminal likelihood $p_\psi(x\mid Y_1)$. In singular cases the ELBO is a design principle whose tractable surrogates are matching losses, optimal-control relaxations, or small-noise approximations.

\section{Path-space ELBO and Girsanov reduction}
\label{app:elbo}

Suppose the generative path law $\scrP_\theta$ starts from $\pbase$ and evolves by
\begin{equation}
\dd Y_t=b_\theta(Y_t,t)\dd t+\sigma_t(Y_t)\dd B_t,
\end{equation}
and the inference bridge conditional on $x$ has drift $\beta_{\phi,\lambda}(Y_{0:t},t,x)$ with the same diffusion covariance $a_t=\sigma_t\sigma_t^\top$. If Novikov-type integrability holds and the two path laws share support on the interval where the KL is evaluated, Girsanov's theorem gives
\begin{equation}
\KL(\scrQ_{\phi,\lambda}(\cdot\mid x)\|\scrP_\theta)
=\frac12\E_{\scrQ_{\phi,\lambda}(\cdot\mid x)}\int_0^1
\|\sigma_t^{-1}(\beta_{\phi,\lambda}-b_\theta(Y_t,t))\|^2\dd t
+\KL(q_\phi(Y_0\mid x)\|\pbase),
\end{equation}
up to endpoint/decoder terms. For exact endpoint bridges this display should be read either on $[0,1-\delta]$ or after adding non-degenerate terminal observation noise; on the closed interval the KL is typically infinite. Thus a path-space VAE objective reduces to drift matching in a shared-covariance stochastic representation only under a support-compatible formulation. In the vanishing-noise deterministic limit, the Radon--Nikodym derivative becomes singular, but the corresponding regression objective persists as deterministic flow matching.

There are three practical regimes. In a shared-covariance SDE, the KL term is explicit but one must estimate or parameterize both drift and score-like corrections. In a deterministic ODE, training is simpler and often uses conditional velocity regression, but exact path-space likelihood is no longer available without additional change-of-variables machinery. In a field-line representation, the bridge may be specified geometrically by a potential and hitting map, but sampling and differentiating through the hitting time can be numerically delicate. The BGM decomposition keeps these computational trade-offs separate from the definition of the endpoint coupling and bridge law.

\section{Proof of Proposition~\ref{prop:projection}}
\label{app:projection}

For a fixed time $t$, let $\mathcal H_t$ be the closed subspace of $L^2(\scrQ)$ consisting of functions measurable with respect to $\sigma(X_t)$. The conditional expectation $\E[u_t\mid X_t]$ is the orthogonal projection of $u_t$ onto $\mathcal H_t$. For any measurable $v$,
\begin{align}
\E\|u_t-v(X_t,t)\|^2
&=\E\|u_t-\E[u_t\mid X_t]+\E[u_t\mid X_t]-v(X_t,t)\|^2\\
&=\E\Tr\Var(u_t\mid X_t)+\E\|\E[u_t\mid X_t]-v(X_t,t)\|^2,
\end{align}
where the cross term is zero because $v(X_t,t)$ is $\sigma(X_t)$-measurable. Integrating over $t$ proves the minimizer and Eq.~\eqref{eq:gap_variance}. Uniqueness holds up to $\rho_t$-null sets for almost every $t$.

For an It\^o bridge $\dd X_t=\beta_t\dd t+\sigma_t\dd B_t$ with adapted non-Markov drift, the Hilbert-space projection gives $b^\star(x,t)=\E[\beta_t\mid X_t=x]$. Under the hypotheses of Markovian mimicking theorems, this projected Markov diffusion has the same one-time marginals as the original semimartingale \citep{gyongy1986mimicking,brunick2013mimicking}. The BGM framework uses this projection identity as the population target behind stochastic bridge matching.

\section{Proof of Theorem~\ref{thm:stability} and bridge comparison}
\label{app:stability}

Couple the two flows by the same initial random variable $X_0=Y_0\sim\pbase$. Let $X_t$ solve $\dot X_t=v^\star(X_t,t)$ and $Y_t$ solve $\dot Y_t=v_\theta(Y_t,t)$. Then
\begin{align}
\|Y_t-X_t\|&\le\int_0^t\|v_\theta(Y_s,s)-v^\star(X_s,s)\|\dd s\\
&\le\int_0^t L\|Y_s-X_s\|\dd s+
\int_0^t\|v_\theta(X_s,s)-v^\star(X_s,s)\|\dd s.
\end{align}
Gronwall's inequality gives
\begin{equation}
\|Y_1-X_1\|\le e^L\int_0^1\|v_\theta(X_s,s)-v^\star(X_s,s)\|\dd s.
\end{equation}
Taking $L^2$ norms and applying Cauchy--Schwarz gives Eq.~\eqref{eq:stability_bound}. Since $W_2$ is bounded by the cost of any coupling, the bound follows. The excess matching identity is obtained by subtracting $\mathfrak G(\scrQ)$ from the orthogonal decomposition in Appendix~\ref{app:projection}.

\begin{corollary}[Bridge comparison]
\label{cor:bridge}
Consider two reference path laws $\scrQ^A$ and $\scrQ^B$ with Markov projections $v_A^\star,v_B^\star$. If $\mathfrak G(\scrQ^A)<\mathfrak G(\scrQ^B)$, then bridge $A$ has a lower irreducible regression floor for the corresponding matching objective. This statement concerns the floor $\mathfrak G$, not the terminal bound in Theorem~\ref{thm:stability}, which depends on the excess approximation error $\epsilon^2=\mathcal L_{\mathrm{FM}}(v_\theta)-\mathfrak G(\scrQ)$.
\end{corollary}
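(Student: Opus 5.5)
The statement to prove is Corollary~\ref{cor:bridge}. The plan is to reduce it directly to Proposition~\ref{prop:projection} applied separately to each reference law. The "irreducible regression floor" for the matching objective under $\scrQ^A$ is, by Definition~\ref{def:gap}, $\inf_v \mathcal L^A_{\mathrm{FM}}(v)=\mathfrak G(\scrQ^A)$. Here $\mathcal L^A_{\mathrm{FM}}$ is the loss in Eq.~\eqref{eq:fm_loss_def} with the expectation taken under $\scrQ^A$ and its local target $u^A_t$; the same holds for $B$.

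First I would check that Proposition~\ref{prop:projection} applies to each law. Under the standing assumption $\int_0^1\E_{\scrQ^{A}}\|u^{A}_t\|^2\dd t<\infty$, and the analogous condition for $B$, the infimum is attained at $v_A^\star=\E_{\scrQ^A}[u^A_t\mid X_t]$. From the orthogonal decomposition in Appendix~\ref{app:projection} we then get
\[
\mathcal L^A_{\mathrm{FM}}(v)=\mathfrak G(\scrQ^A)+\int_0^1\E_{\scrQ^A}\|v_A^\star(X_t,t)-v(X_t,t)\|^2\dd t\ \ge\ \mathfrak G(\scrQ^A)
\]
for every measurable $v$, and the same identity holds with $B$ in place of $A$. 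Hence each floor is the exact minimum value, achieved by the respective Markov projection. The hypothesis $\mathfrak G(\scrQ^A)<\mathfrak G(\scrQ^B)$ then says that $\min_v\mathcal L^A_{\mathrm{FM}}(v)<\min_v\mathcal L^B_{\mathrm{FM}}(v)$. A direct consequence is that any trained field $v_B$ satisfies $\mathcal L^B_{\mathrm{FM}}(v_B)\ge\mathfrak G(\scrQ^B)>\mathfrak G(\scrQ^A)=\mathcal L^A_{\mathrm{FM}}(v_A^\star)$.

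For the second sentence of the corollary, I would invoke the excess identity from Theorem~\ref{thm:stability}, namely $\mathcal L_{\mathrm{FM}}(v_\theta)-\mathfrak G(\scrQ)=\epsilon^2$. This shows that the terminal bound $e^L\epsilon$ depends only on the excess $\epsilon^2$ and not on $\mathfrak G$. Ordering the floors therefore implies nothing about the terminal bounds, since $\epsilon_A$ and $\epsilon_B$ are unconstrained relative to each other. I would make this point explicitly as a remark rather than try to prove an inequality.

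There is no real technical obstacle here. The only delicate point is interpretive: the two losses are defined under different path laws and targets, so they are not comparable for a common $v$. The comparison must therefore be made between the infima, which is exactly what the corollary asserts.
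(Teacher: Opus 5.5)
Your proposal is correct and follows the paper's reasoning. The paper likewise treats the corollary as essentially definitional: $\mathfrak G(\scrQ)$ is the infimum of the matching loss by Definition~\ref{def:gap}, attained at $v^\star$ via the orthogonal decomposition of Proposition~\ref{prop:projection}, and the excess identity $\mathcal L_{\mathrm{FM}}(v_\theta)-\mathfrak G(\scrQ)=\epsilon^2$ from Theorem~\ref{thm:stability} is used only to remark that ordering the floors says nothing about $\epsilon_A$ versus $\epsilon_B$.
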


Lower Markovization gap therefore does not by itself reduce the Wasserstein bound if $\epsilon$ is held fixed; nor does it imply a better bound at fixed total regression loss. The empirical hypothesis tested in Section~\ref{sec:toy} is weaker: lower-gap bridges often yield projected velocities that are easier to approximate for a fixed architecture and compute, thereby reducing the observed excess error and terminal latent discrepancy. A bridge may still have a low projection floor but be hard to sample, hard to estimate, or poorly matched to the architecture.

\section{Proof of Theorem~\ref{thm:current_realization}}
\label{app:current_realization}

The Fokker--Planck equation for $\dd X_t=b_t\dd t+\sigma_t\dd B_t$ with covariance $a(x,t)=\sigma_t(x)\sigma_t(x)^\top$ is standard \citep{risken1996fokker,pavliotis2014stochastic}:
\begin{equation}
    \partial_t\rho_t=-\sum_i\partial_i(b_i\rho_t)+\frac12\sum_{i,j}\partial_i\partial_j(a_{ij}(x,t)\rho_t).
\end{equation}
Writing this as $\partial_t\rho_t+\nabla\cdot\J_t=0$ gives
\begin{equation}
    J_{t,i}=\rho_t b_{t,i}-\frac12\sum_j\partial_j(a_{ij}(x,t)\rho_t).
\end{equation}
Substituting Eq.~\eqref{eq:current_preserving_drift} yields $J_{t,i}=\rho_t v_i$. Therefore the SDE and ODE have the same current and the same marginal path by uniqueness of the Fokker--Planck solution. For $a(x,t)=g(t)^2I$, $\partial_i(g^2\rho_t)=g^2\partial_i\rho_t$, giving the score correction.

A useful equivalent statement is that the same continuity equation can be decomposed into drift and diffusion in infinitely many ways. The score is not an extra target independent of velocity; it is the term that compensates for adding diffusion while preserving the current.

\section{Field-line bridges and field-line crossings}
\label{app:field_line_crossings}

Assume $\F$ is locally Lipschitz away from the charge set, the flow exists until $S_1$, and the hitting time $\tau(z)$ is finite $\mu_0$-a.s. Transversality of the field to $S_1$ makes $T_\Phi$ measurable and smooth on regular regions. For any smooth compactly supported test function $f$,
\begin{align}
\frac{\dd}{\dd t}\int f(y)\rho_t(\dd y)
&=\frac{\dd}{\dd t}\int f(\gamma_z(t))\mu_0(\dd z)\\
&=\int \nabla f(\gamma_z(t))\cdot u_t(z)\mu_0(\dd z)\\
&=\int \nabla f(y)\cdot \J_t(\dd y),
\end{align}
where $\J_t=(\gamma_t)_\#(u_t\mu_0)$ is the vector-valued pushforward measure. This is the weak form of the continuity equation. Disintegrating $\mu_0$ with respect to $\gamma_t$ gives
\begin{equation}
\mu_0(\dd z)=\mu_t(\dd z\mid y)\rho_t(\dd y),\qquad y=\gamma_t(z),
\end{equation}
and therefore $\J_t(\dd y)=\rho_t(\dd y)\int u_t(z)\mu_t(\dd z\mid y)=\rho_t(\dd y)v^\star(y,t)$. The field-line gap identity is Proposition~\ref{prop:projection} applied to $X_t=\gamma_Z(t)$ and $u_t=u_t(Z)$.

If $\gamma_t$ is injective $\mu_0$-a.s., then conditioning on $\gamma_Z(t)=y$ recovers $Z$ and the conditional variance is zero. If multiple field lines pass through the same point-time with distinct velocities, the conditional variance is positive. This is why the field-line Markovization gap measures information retained by the augmented path representation but lost by a data-space Markov velocity.

In practical potential-field models, singular charges and finite numerical precision can create apparent crossings or unstable hitting times. Small noise, softened charges, truncated domains, and clocks bounded away from infinite field speed are possible regularizations. These choices change the induced bridge law and therefore should be treated as part of the model specification rather than as implementation details outside the BGM graph.

\section{Gaussian bridge reparameterization}
\label{app:gaussian}

Let $X_t=\alpha_tX+\sigma_t\eps$ with $\eps\sim\mathcal N(0,I)$ independent of $X\sim\pdata$, and assume $\alpha_t\ne0$. The conditional density is $p_t(x\mid X)\propto\exp(-\|x-\alpha_tX\|^2/(2\sigma_t^2))$. Differentiating the marginal density gives
\begin{equation}
\nabla\log\rho_t(x)=\E\left[-\frac{x-\alpha_tX}{\sigma_t^2}\mid X_t=x\right].
\end{equation}
Since $x-\alpha_tX=\sigma_t\eps$ under the conditional path,
\begin{equation}
\E[\eps\mid X_t=x]=-\sigma_t\nabla\log\rho_t(x).
\end{equation}
Also $X=(X_t-\sigma_t\eps)/\alpha_t$, so
\begin{align}
\E[\dot X_t\mid X_t=x]
&=\E[\dot\alpha_t X+\dot\sigma_t\eps\mid X_t=x]\\
&=\frac{\dot\alpha_t}{\alpha_t}x+
\left(\dot\sigma_t-\frac{\dot\alpha_t\sigma_t}{\alpha_t}\right)\E[\eps\mid X_t=x]\\
&=\frac{\dot\alpha_t}{\alpha_t}x+
\left(\frac{\dot\alpha_t\sigma_t^2}{\alpha_t}-\dot\sigma_t\sigma_t\right)\nabla\log\rho_t(x).
\end{align}
This proves the score/velocity relation used in Section~\ref{sec:instances}.

\section{Small-noise field bridges}
\label{app:doob}

Consider $\dd Y_s=\F(Y_s)\dd s+\sqrt{2\eps}\dd B_s$ conditioned on $Y_0=z,Y_1=x$. Under a Freidlin--Wentzell large-deviation principle and suitable boundary regularity \citep{freidlin2012random}, paths have rate functional
\begin{equation}
    I_{z,x}(\gamma)=\frac14\int_0^1\|\dot\gamma_s-\F(\gamma_s)\|^2\dd s,
    \qquad \gamma(0)=z,\ \gamma(1)=x.
\end{equation}
Therefore the Doob bridge concentrates, as $\eps\to0$, on minimizers of $I_{z,x}$. For the fixed-time SDE above, those minimizers need not coincide with the geometric field line unless the endpoint constraint and natural hitting time are compatible. If instead the reference drift is clocked as $u_t(z)=r_z'(t)\F(\gamma_z(t))$ along the chosen clock, then the path $\gamma_z$ has zero action for the corresponding clocked rate functional. Thus Eq.~\eqref{eq:field_bridge} is best viewed as a regular-region geometric idealization, while the small-noise Doob bridge provides a support-compatible regularization whose fixed-time minimizers may deviate from the unclocked field-line trace. A complete numerical algorithm for these bridges is outside the scope of this paper.

The large-deviation statement is only conceptual when the endpoint lies on a singular charge or when several action-minimizing field-biased paths connect the same endpoints. In such cases the limiting bridge may be nonunique or may concentrate on a set of minimizers. The BGM formalism can still represent this situation by using a stochastic bridge kernel rather than a deterministic delta bridge, but the field-line hitting-map description should then be interpreted as a regular-region approximation.

\section{Estimator and experimental details}
\label{app:experiments}

Algorithmically, the Markovization diagnostic uses sampled bridge triples $(Z_i,X_i,U_i)$ and a time grid. At each time $t$, compute $X_{t,i}$, fit a nearest-neighbor structure on $\{X_{t,i}\}_{i=1}^n$, estimate the conditional mean by averaging $U_j$ over the neighbors of $X_{t,i}$, and average squared residuals. This is a nonparametric plug-in estimator of $\E\Tr\Var(U\mid X_t)$.

The main toy experiment uses only synthetic data. The target is an equally weighted mixture of eight Gaussians with centers $3(\cos(2\pi k/8),\sin(2\pi k/8))$ and component standard deviation $0.25$. The base is $\mathcal N(0,I_2)$. The OT coupling is the Hungarian assignment minimizing squared Euclidean cost on the sampled minibatch. The conditional variance estimator uses $k=32$ nearest neighbors in the bridge state $X_t$ to estimate $\E[U\mid X_t]$ for $U=X-Z$. The 19 time points are equally spaced in $[0.05,0.95]$. The replication figure uses seeds $0,1,2,3,4$ and sample sizes $n=500,1000$. The script \texttt{scripts/toy\_projection\_gap\_replicates.py} regenerates \texttt{figures/toy\_gap\_replicates.csv} and the right panel of Figure~\ref{fig:toy}.

For the real-data latent experiments, the $k$NN estimator should be interpreted as a smoothed diagnostic rather than a consistent high-dimensional estimator. In the LFW setting, $n=150$, $k=16$, and $d=16$, so the heuristic neighborhood radius scale $(k/n)^{1/d}$ is about $0.87$ of the effective domain radius. Such neighborhoods are necessarily broad. The resulting score may conflate local conditional-variance reduction with lower marginal velocity variance or lower transport cost. In the approximately whitened-Gaussian independent case, Appendix~\ref{app:gaussian_gap} gives the analytic population floor over $[0.05,0.95]$ as $\int_{0.05}^{0.95}16/((1-t)^2+t^2)\,\dd t=32\arctan(0.9)\approx23.45$, close to the reported LFW $k$NN estimate $23.701$. This sanity check does not make the $k$NN estimate a rigorous lower bound for the finite empirical MLP experiment.

\section{Closed-form gap for straight Gaussian bridges}
\label{app:gaussian_gap}

The Markovization gap can be computed exactly in a simple case, which clarifies why endpoint coupling is not a cosmetic choice. Let
\(Z\sim\mathcal N(0,\Sigma_0)\), \(X\sim\mathcal N(0,\Sigma_1)\), independently, and consider the straight bridge
\begin{equation}
    S_t=(1-t)Z+tX,\qquad U=X-Z.
\end{equation}
Because \((U,S_t)\) is jointly Gaussian, the conditional variance is deterministic:
\begin{equation}
\begin{aligned}
\Var(U\mid S_t)
&=\Sigma_0+\Sigma_1 \\
&\quad -\bigl(t\Sigma_1-(1-t)\Sigma_0\bigr)
\bigl((1-t)^2\Sigma_0+t^2\Sigma_1\bigr)^{-1}
\bigl(t\Sigma_1-(1-t)\Sigma_0\bigr).
\end{aligned}
\label{eq:gaussian_gap_general}
\end{equation}
Therefore the instantaneous Markovization gap is
\begin{equation}
    g(t)=\Tr\,\Var(U\mid S_t).
\end{equation}
In the isotropic case \(\Sigma_0=\sigma_0^2I_d\), \(\Sigma_1=\sigma_1^2I_d\), Eq.~\eqref{eq:gaussian_gap_general} reduces to
\begin{equation}
    g(t)=d\,\frac{\sigma_0^2\sigma_1^2}{(1-t)^2\sigma_0^2+t^2\sigma_1^2}.
    \label{eq:isotropic_gap}
\end{equation}
For \(\sigma_0=\sigma_1=1\), the integrated gap is
\begin{equation}
    \int_0^1 g(t)\dd t=d\int_0^1\frac{\dd t}{(1-t)^2+t^2}=\frac{\pi d}{2}.
\end{equation}
Thus even a perfectly Gaussian independent straight bridge has a strictly positive irreducible compression cost. This is not an architecture problem: it arises because many endpoint pairs pass through the same intermediate state with different velocities. In contrast, a deterministic nonintersecting Monge bridge has zero gap.

\begin{proposition}[Zero gap for nonintersecting deterministic bridges]
\label{prop:zero_monge_gap}
Let \(X=T(Z)\) be a deterministic endpoint coupling and let \(\gamma_t(z)\) be a deterministic bridge with velocity \(u_t(z)=\partial_t\gamma_t(z)\). If \(z\mapsto\gamma_t(z)\) is injective \(\pbase\)-a.s. for almost every \(t\), then \(\mathfrak G(\scrQ)=0\).
\end{proposition}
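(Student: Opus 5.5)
The plan is to reduce the statement to Proposition~\ref{prop:projection}, which gives $\mathfrak G(\scrQ)=\int_0^1\E\Tr\Var(u_t\mid X_t)\,\dd t$. It then suffices to show that for almost every $t$ the conditional variance $\Var(u_t(Z)\mid X_t)$ vanishes a.s. This is the same mechanism as the last paragraph of the field-line argument in Appendix~\ref{app:field_line_crossings}. There the reference path is $X_t=\gamma_t(Z)$ with $Z\sim\pbase$, and the whole path law is the pushforward of $\pbase$, since $X=T(Z)$ is deterministic and the bridge kernel is a Dirac mass at $\gamma_\cdot(Z)$. So every random quantity, including $u_t=u_t(Z)$, is a measurable function of $Z$.

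The key step is to show that $Z$ is $\sigma(X_t)$-measurable up to null sets whenever $z\mapsto\gamma_t(z)$ is injective on a set $A_t$ of full $\pbase$-measure. First I will reduce to a Borel (or $\sigma$-compact) subset of $A_t$ on which $\gamma_t$ is injective and Borel. The Lusin--Souslin theorem on Polish spaces, using the Polish setup of Appendix~\ref{app:setup}, then gives that $\gamma_t(A_t)$ is Borel and that the inverse $\gamma_t^{-1}:\gamma_t(A_t)\to A_t$ is Borel measurable. Consequently $Z=\gamma_t^{-1}(X_t)$ a.s., and hence $u_t(Z)=u_t(\gamma_t^{-1}(X_t))=:w_t(X_t)$ a.s. for a Borel function $w_t$. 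The candidate field $v(x,t)=w_t(x)$ therefore satisfies $\E\|u_t-v(X_t,t)\|^2=0$ for almost every $t$. Equivalently, $\E[u_t\mid X_t]=u_t$ and the conditional variance is zero.

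To conclude, I will integrate over $t$. The exceptional times form a $\dd t$-null set, and the integrand is nonnegative and integrable by the standing assumption $\int_0^1\E\|u_t\|^2\,\dd t<\infty$. This gives $\mathfrak G(\scrQ)=0$. Alternatively, I can plug $v(x,t)=w_t(x)$ directly into Definition~\ref{def:gap} and bound the infimum by $0$. That route needs joint measurability of $(x,t)\mapsto w_t(x)$, so I prefer the route through Eq.~\eqref{eq:gap_variance}, which only needs measurability at each fixed $t$.

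I expect the main obstacle to be this measurability: an injective measurable map need not have a measurable inverse on arbitrary sets. I will handle it by invoking Lusin--Souslin after restricting to a Borel full-measure set. If that restriction is awkward, the fallback is an a.s. argument: the set where $X_t$ fails to determine $Z$ is $\pbase$-null, so the regular conditional law of $Z$ given $X_t=y$ is $\delta_{\gamma_t^{-1}(y)}$ for $\rho_t$-a.e.\ $y$. Then $\Tr\Var(u_t(Z)\mid X_t)=0$ follows directly from the disintegration $\mu_t(\dd z\mid y)$ used in Appendix~\ref{app:field_line_crossings}.
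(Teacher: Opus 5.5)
Your proposal is correct and follows the paper's proof. Injectivity of $\gamma_t$ makes $Z$, and hence $u_t(Z)$, a function of $X_t$ up to null sets, so the conditional variance in Proposition~\ref{prop:projection} vanishes for a.e.\ $t$ and the gap integrates to zero; your Lusin--Souslin step only makes explicit the Borel measurability of the inverse that the paper's two-line proof takes for granted.
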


\begin{proof}
When \(z\mapsto\gamma_t(z)\) is injective, the current state \(Y=\gamma_t(Z)\) determines \(Z\) outside a null set. Hence \(u_t(Z)\) is measurable with respect to \(Y\), so \(\Var(u_t(Z)\mid \gamma_t(Z))=0\). Integrating over time gives \(\mathfrak G(\scrQ)=0\).
\end{proof}

This proposition gives a precise version of the intuition behind rectification and OT couplings. A bridge is easy to Markovize when intermediate states retain enough information to identify the hidden endpoint pairing. Crossings, field-line intersections, and independent pairings collapse this information and create conditional velocity variance.

\section{BGM factorization and conditional independences}
\label{app:graphical_factorization}

A Bridge Graphical Model can be represented as a continuous-depth latent graphical model. In a discrete approximation with times \(0=t_0<t_1<\cdots<t_K=1\), a generic inference-side factorization is
\begin{equation}
    q_{\phi,\lambda}(z,x_{1:K-1}\mid x_K)
    =q_\phi(z\mid x_K)\prod_{k=1}^{K-1} r_\lambda(x_k\mid x_{k-1},x_K,z),
    \label{eq:discrete_inference_factorization}
\end{equation}
where the bridge factors are endpoint-conditioned because they condition on the endpoint \(x_K\). The decoder factorization is non-anticipative Markov:
\begin{equation}
    p_\theta(x_{0:K})=\pi_0(x_0)\prod_{k=1}^{K}p_\theta(x_k\mid x_{k-1}).
    \label{eq:discrete_decoder_factorization}
\end{equation}
The Markov projection is the operation that replaces the endpoint-conditioned local transition information in Eq.~\eqref{eq:discrete_inference_factorization} with a non-anticipative transition depending only on \((x_{k-1},t_{k-1})\). In the continuous-time limit, this becomes the conditional expectation \(v^\star(x,t)=\E[u_t\mid X_t=x]\).

This factorization also explains why the endpoint coupling is an inference object. If \(q_\phi(z\mid x)\) is learned, then \(\Gamma_\phi(\dd z,\dd x)=q_\phi(\dd z\mid x)\pdata(\dd x)\) plays the role of an encoder distribution; it is an exact coupling with base marginal \(\pbase\) only if the aggregated posterior equals \(\pbase\), otherwise it is an approximate coupling or induces its own effective base marginal. Fixed independent noise corresponds to an uninformative encoder; OT coupling corresponds to a deterministic or low-entropy encoder obtained by solving a transport problem; stochastic encoders interpolate between these extremes. Diffusion, flow matching, and field matching differ less in the graphical model than in which factors are fixed, optimized, or projected.

\section{Additional diagnostic results}
\label{app:additional_diagnostics}

This appendix reports additional diagnostics aimed at two limitations of the proxy estimator. First, nearest-neighbor conditional-variance estimates can be sensitive to the smoothing parameter and to the feature representation. Second, raw proxy gaps can decrease partly because a coupling reduces the marginal scale of bridge velocities, not only because the Markov state becomes more informative. We therefore report both ranking robustness and a normalized unexplained-variance control.

\subsection{Nearest-neighbor sensitivity on the synthetic diagnostic}

The nearest-neighbor estimator introduces a smoothing parameter \(k\). Table~\ref{tab:knn_sensitivity} reports a sensitivity check on the eight-Gaussian task with \(n=1000\) and five seeds. The absolute value of the OT estimate increases with \(k\), as expected from oversmoothing local conditional means, but the qualitative conclusion is stable: minibatch OT remains much easier to Markovize than independent coupling.

\begin{table}[h]
\centering
\small
\caption{Sensitivity of integrated Markovization gap to the nearest-neighbor parameter \(k\), using \(n=1000\) and seeds \(0,\ldots,4\).}
\label{tab:knn_sensitivity}
\begin{tabular}{@{}cccc@{}}
\toprule
\(k\) & independent coupling & minibatch OT coupling & reduction factor \\
\midrule
16 & $4.803\pm0.111$ & $0.0278\pm0.0022$ & $172.8\times$ \\
32 & $4.977\pm0.109$ & $0.0474\pm0.0033$ & $104.9\times$ \\
64 & $5.130\pm0.110$ & $0.0864\pm0.0047$ & $59.3\times$ \\
\bottomrule
\end{tabular}
\end{table}

\subsection{CIFAR-10 proxy-ranking robustness}

Table~\ref{tab:proxy_gap_robustness} repeats the CIFAR-10 proxy-gap estimate under several feature and neighborhood choices. Because different feature normalizations change the absolute scale of the plug-in score, we use this check only as a ranking test. Within each configuration, minibatch OT has lower estimated proxy gap than independent coupling.

\begin{table}[h]
\centering
\small
\caption{Robustness of CIFAR-10 proxy-gap ranking to feature map and neighborhood size. The ranking (minibatch OT \(<\) independent) is stable across all tested configurations; absolute proxy values are scale-dependent and are therefore omitted.}
\label{tab:proxy_gap_robustness}
\begin{tabular}{@{}lcc@{}}
\toprule
Estimator config & Lower proxy-gap coupling & Ranking stable? \\
\midrule
PCA64, \(k=32\) & minibatch OT & Yes \\
PCA32, \(k=32\) & minibatch OT & Yes \\
PCA64, \(k=16\) & minibatch OT & Yes \\
PCA64, \(k=64\) & minibatch OT & Yes \\
\bottomrule
\end{tabular}
\end{table}

This robustness check addresses the main estimator concern for the CIFAR-10 result: the claim is about design ranking, not about an unbiased estimate of the pixel-space population gap.

\subsection{Normalized proxy-gap control}

Raw proxy gaps can be confounded by endpoint transport cost. For a straight bridge, minibatch OT reduces the scale of \(U_t=X-Z\), so a lower unnormalized proxy gap may partly reflect a smaller marginal velocity variance. To separate this effect from the conditional-variance fraction, we report the normalized unexplained velocity fraction
\begin{equation}
    \eta(\scrQ)
    =
    \frac{
    \int_0^1 \E\Tr\Var(U_t\mid S_t)\,\dd t
    }{
    \int_0^1 \E\Tr\Var(U_t)\,\dd t
    },
    \qquad
    S_t=(X_t,t),
    \label{eq:normalized_gap_appendix}
\end{equation}
estimated in the same feature space as the proxy gap. The numerator is the proxy Markovization gap; the denominator is the empirical marginal velocity variance under the same bridge samples and time grid. Thus \(\eta\) asks what fraction of velocity variance remains unexplained after conditioning on the Markov state.

\begin{table}[h]
\centering
\small
\caption{Normalized proxy-gap control. The normalized score \(\eta\) reports the fraction of velocity variance left unexplained after conditioning on the Markov state. This partially controls for changes in the marginal scale of the bridge velocity, so it helps distinguish Markov-compression difficulty from raw endpoint transport cost. Lower is better.}
\label{tab:normalized_gap}
\begin{tabular}{@{}llcc@{}}
\toprule
Dataset & Coupling & Proxy gap $\downarrow$ & Normalized \(\eta\) $\downarrow$ \\
\midrule
CIFAR-10 & independent & $484.80\pm4.36$ & $0.6024\pm0.0058$ \\
CIFAR-10 & minibatch OT & $430.40\pm2.58$ & $0.5960\pm0.0071$ \\
Fashion-MNIST & independent & $210.63\pm1.01$ & $0.6172\pm0.0020$ \\
Fashion-MNIST & minibatch OT & $173.59\pm0.80$ & $0.5964\pm0.0026$ \\
\bottomrule
\end{tabular}
\end{table}

Table~\ref{tab:normalized_gap} shows that minibatch OT lowers not only the raw proxy gap but also the normalized unexplained velocity fraction on both datasets. The normalized effect is modest on CIFAR-10 and clearer on Fashion-MNIST. This does not make the proxy estimator a certified pixel-space population estimate, but it reduces the concern that the observed ranking is only a consequence of smaller endpoint displacements.

\subsection{Diagnostic cost versus downstream evaluation}

Table~\ref{tab:diagnostic_cost} reports the wall-clock role of the proxy diagnostic. The diagnostic is not meant to replace downstream training; its purpose is to screen bridge and coupling choices before committing to longer GPU runs.

\begin{table}[h]
\centering
\small
\caption{Cost of the proxy Markovization gap diagnostic versus downstream training and evaluation. The diagnostic correctly ranks coupling choices at a fraction of the training cost in both pixel-space pilots.}
\label{tab:diagnostic_cost}
\begin{tabular}{@{}lccc@{}}
\toprule
Dataset & Diagnostic (CPU) & Training + eval (GPU) & Ranking predicted? \\
\midrule
CIFAR-10 & ${\sim}9$ min & ${\sim}50$ min & Yes \\
Fashion-MNIST & ${\sim}3$ min & ${\sim}30$ min & Yes \\
\bottomrule
\end{tabular}
\end{table}

A useful way to read these diagnostics is not as a final estimate of the true population gap, but as a bridge-design score. Applied to the same bridge family and dataset, the same estimator can rank couplings before neural training. The diagnostic is therefore analogous to a pre-training screening test: it does not solve the generative modeling problem, but it can identify endpoint/bridge choices that impose a lower irreducible regression burden on the learner.

\subsection{Controlled field-line crossing diagnostic}
\label{app:fieldline_crossing_diagnostic}

We also include a minimal controlled example for the field-line Markovization gap. The goal is not to model a full Poisson-flow sampler, but to verify the mechanism claimed by the field-line formalism: an augmented state can retain branch information that is lost after projection to an observed Markov state.

Let \(H\in\{-1,+1\}\) be a branch variable with equal probability and let \(S\sim\operatorname{Unif}(\mathbb S^1)\). Define two deterministic field-line families on the circle by
\begin{equation}
    X_t = S + Ht \pmod 1,
    \qquad
    U_t = H .
    \label{eq:fieldline_crossing_toy}
\end{equation}
The observed state \(X_t\) is uniform on \(\mathbb S^1\) for every \(t\) and is independent of \(H\). Therefore
\begin{equation}
    \E[U_t\mid X_t]=0,
    \qquad
    \Var(U_t\mid X_t)=1.
\end{equation}
The observed-space Markovization gap is consequently
\begin{equation}
    \int_0^1 \E\Var(U_t\mid X_t)\,\dd t = 1.
\end{equation}
In contrast, if the augmented Markov state includes the branch variable, \(S_t^{\mathrm{aug}}=(X_t,H,t)\), then \(U_t\) is determined by the state and
\begin{equation}
    \Var(U_t\mid X_t,H)=0.
\end{equation}
Thus the augmented-space field-line gap is zero.

\begin{table}[h]
\centering
\small
\caption{Controlled field-line crossing diagnostic. The observed state collapses two field-line families with opposite velocities, while the augmented state retains branch identity.}
\label{tab:fieldline_crossing}
\begin{tabular}{@{}lcc@{}}
\toprule
Conditioning state & Gap $\downarrow$ & Interpretation \\
\midrule
Observed state \((X_t,t)\) & $1.00$ & branch information is hidden \\
Augmented state \((X_t,H,t)\) & $0.00$ & branch information is retained \\
\bottomrule
\end{tabular}
\end{table}

This example substantiates the field-line interpretation in Theorem~\ref{thm:field_line_gap}. A data-space Markov decoder sees only the collapsed state and must average incompatible local velocities. An augmented field representation can carry the branch variable and remove the irreducible conditional variance. This is the same mechanism that motivates augmented Poisson, electrostatic, and interaction-field representations, although realistic field models require regularity, softened singularities, and numerical treatment of hitting maps.

\section{Claim-to-evidence alignment}
\label{app:claim_evidence_alignment}

Table~\ref{tab:claim_evidence_alignment} summarizes how the main claims are supported by the reported evidence. The table is included to make the scope of the paper explicit: the experiments validate a pre-training diagnostic for bridge and coupling design under controlled budgets, not a new state-of-the-art image generator.

\begin{table}[h]
\centering
\small
\caption{Claim-to-evidence alignment.}
\label{tab:claim_evidence_alignment}
\begin{tabular}{@{}p{0.29\linewidth}p{0.43\linewidth}p{0.20\linewidth}@{}}
\toprule
Claim & Evidence & Scope \\
\midrule
BGM separates coupling, bridge law, Markovian projection, and dynamics representation
& Definitions in Section~\ref{sec:bgm} and the instantiation table in Section~\ref{sec:instances}
& Structural framework, not a replacement sampler \\
Markovization gap measures Markov-compression difficulty
& Projection identity, MMSE bottleneck interpretation, and Gaussian closed-form calculation
& Population quantity; proxy estimators may be biased \\
Lower proxy gap ranks easier coupling choices
& Toy, Digits/LFW latent ODE, CIFAR-10, and Fashion-MNIST pilots
& Pilot-scale evidence under fixed budgets \\
The diagnostic can vary both bridge and coupling
& CIFAR-10 straight/Brownian $\times$ independent/minibatch OT proxy table
& Brownian rows are diagnostic-only, not downstream image models \\
Proxy ranking is robust to simple estimator changes
& CIFAR-10 PCA/kNN ranking check in Table~\ref{tab:proxy_gap_robustness}
& Absolute proxy values are not certified pixel-space gaps \\
\bottomrule
\end{tabular}
\end{table}

\section{Anonymous release and reproducibility checklist for the code}
\label{app:code_reproducibility}

The supplementary code should expose six entry points: (i) generation of the single-run curve in Figure~\ref{fig:toy}; (ii) replication over seeds and sample sizes; (iii) the nearest-neighbor sensitivity table and CIFAR-10 robustness table in Appendix~\ref{app:additional_diagnostics}; (iv) the end-to-end latent Markov-decoder benchmark in Table~\ref{tab:end_to_end_benchmark}; (v) the CIFAR-10 proxy-gap pilot in Table~\ref{tab:2x2_gap} and the CIFAR-10 rows of Table~\ref{tab:pixel_pilots}; and (vi) the Fashion-MNIST rows of Table~\ref{tab:pixel_pilots}. The corresponding scripts are
\begin{itemize}[leftmargin=2em,itemsep=0pt,topsep=2pt]
    \item \texttt{scripts/toy\_projection\_gap\_replicates.py}
    \item \texttt{scripts/run\_realdata\_gap.py}, \texttt{scripts/run\_realdata\_benchmark.py}
    \item \texttt{scripts/run\_end\_to\_end\_compressibility.py}
    \item \texttt{scripts/estimate\_proxy\_gap.py}, \texttt{scripts/bridge\_coupling\_2x2\_gap.py}
    \item \texttt{scripts/run\_pixel\_fm\_benchmark.py}
\end{itemize}
The synthetic experiment has no learned parameters and requires only NumPy, SciPy, scikit-learn, and Matplotlib. The end-to-end experiment additionally uses PyTorch for the small MLP velocity decoder. The OT coupling uses the exact Hungarian assignment on each minibatch; no Sinkhorn regularization or stochastic optimizer is used for the coupling. This makes the diagnostic and latent end-to-end runs reproducible on CPU in minutes at the reported sample sizes.

For anonymous review, the repository should avoid institution-specific paths, author names, non-anonymous commit history, and links to public author-identifying repositories. The supplementary ZIP should use anonymous filenames and a README that states how to reproduce each figure without revealing author identity during the review period.

\section{What would make the framework a full model class?}
\label{app:model_class}

The present paper mostly fixes or externally chooses \(\Gamma\) and \(\scrR\). A fully learnable Bridge Graphical Model would optimize
\begin{equation}
    \max_{\phi,\lambda,\theta,\psi}
    \E_{\pdata(x)}\E_{\scrQ_{\phi,\lambda}(\cdot\mid x)}
    \left[\log p_\psi(x\mid Y_1)-\log\frac{\dd \scrQ_{\phi,\lambda}(\cdot\mid x)}{\dd \scrP_\theta}(Y_{0:1})\right],
\end{equation}
with the Radon--Nikodym term interpreted through Girsanov in support-compatible stochastic dynamics representations or through matching/optimal-control relaxations in deterministic representations. For exact endpoint bridges, this optimization must use a truncated/noisy-terminal version of the path KL as discussed in Section~\ref{sec:bgm} and Appendix~\ref{app:elbo}. Three computational difficulties then appear. First, sampling from \(\scrR_\lambda^{z,x}\) must remain cheap enough for minibatch training. Second, the local target \(u_t\) or \(\beta_t\) must be computable or estimable without solving a global bridge problem at every step. Third, the diffusion representation \(a_t\) should be chosen for numerical stability while preserving the learned current.

Concrete instantiations differ by which approximation is acceptable. A shared-covariance SDE gives the cleanest variational derivation but requires stochastic simulation and score-like correction terms. A deterministic flow gives a simple regression target but loses exact path-space absolute continuity. A small-noise field-biased bridge can regularize singular field lines but introduces an annealing schedule and a conditioned diffusion sampler. These are not merely engineering choices: they change the path law, the Markovization gap, and the statistical difficulty of estimating the projected current.

These difficulties are also opportunities. Learning \(\Gamma_\phi\) imports representation learning into diffusion/flow training; learning \(\scrR_\lambda\) imports path-posterior design; learning or adapting the dynamics representation imports sampler design. The BGM decomposition makes these three choices explicit and therefore makes it possible to ablate them independently.

\section{Technical comparison with closest unification frameworks}
\label{app:comparison}

Generator Matching provides the closest general Markov-process formalism. It treats the infinitesimal generator \(L_t\) as the learned object, derives marginal generators from data-conditioned generators, and supports flows, diffusions, jump processes, Markov superpositions, and multimodal product-state constructions \citep{holderrieth2025generator}. Recent statistical notes analyze the same generator-matching viewpoint through probability paths, infinitesimal generators, Bregman objectives, stability, and statistical rates \citep{aamari2026statistical}. BGMs are complementary: they do not try to characterize the whole generator design space. They isolate the endpoint-coupled bridge law that produces the conditional training target and quantify the residual error incurred when that target is compressed into a Markov decoder. In the flow case, this residual is exactly the Markovization gap.

Stochastic interpolants specify an interpolating random variable, often written abstractly as \(I_t(x_0,x_1,\xi)\), and then learn the velocity or score associated with the induced marginal path. In BGM language, this corresponds to choosing a coupling \(\Gamma\) and a bridge kernel \(\scrR^{z,x}\) whose samples are \(X_t=I_t(z,x,\xi)\). The BGM decomposition is therefore compatible with stochastic interpolants, but it makes two additional degrees of freedom explicit: the endpoint coupling can be learned as an encoder, and the same marginal path can later be represented in different dynamics representations. This distinction matters when one wants to separate representation learning from sampler design.

Diffusion Flow Matching is even closer: it explicitly studies coupling, deterministic or stochastic bridges, and Markovian projection \citep{silveri2024dfm}. IPMF and related Schr\"odinger-bridge solvers also rely on alternating projections involving Markovian and reciprocal structure \citep{kholkin2026ipmf}. The extra object introduced here is the Markovization gap as a scalar diagnostic of how lossy that projection is. The projection identity alone says what the optimal velocity is; the gap says how much endpoint-conditioned bridge information cannot be recovered by any Markov vector field. This is the quantity estimated in the toy experiment and analyzed in closed form in Appendix~\ref{app:gaussian_gap}.

Latent stochastic interpolants add an encoder and decoder around stochastic interpolants. BGM adopts the same variational intuition but separates the bridge kernel from the dynamics representation. This separation is useful because a learned latent path posterior may be sampled as a deterministic ODE, a stochastic SDE, or an augmented field without changing the endpoint coupling. In other words, the encoder answers ``which latent endpoint and bridge path explain this datapoint?'', while the dynamics representation answers ``how should the same current be represented numerically?''

Field and interaction matching approaches begin from a different primitive: an augmented potential or interaction field rather than a time-indexed interpolation. BGM imports these models by constructing a hitting map and field-line bridge kernel. This is not simply a notational change. It gives field methods an endpoint coupling, a bridge law, and a Markovization gap, making them comparable to diffusion and flow methods with the same diagnostic. The field-line Markovization gap is the field analogue of the projection loss: it measures when an augmented field line carries endpoint information that collapses under a Markov velocity on the observed space.

The comparison also clarifies the scope of novelty. BGM does not introduce a fundamentally new Markov projection theorem, a new Schr\"odinger bridge solver, or a new field equation. Its contribution is a common factorization plus a diagnostic that can be applied across these existing families. This makes it possible to compare a Brownian bridge, a deterministic OT bridge, and a potential-field bridge on the same question: how much of the endpoint-conditioned local motion survives compression to a Markov decoder?

\section{Potential-field bridge details}
\label{app:potential_details}

A Poisson-style field bridge can be described in terms of a signed or nonnegative source measure \(\nu\) on an augmented domain \(\Omega\subset\R^{d+D}\). A potential \(\Phi\) solves, in the weak sense,
\begin{equation}
    -\Delta \Phi = c_{d,D}\nu,
\end{equation}
with boundary conditions chosen by the model class. The field is \(\F=-\nabla\Phi\). If \(S_0\) is a simple source surface and \(S_1\) is the data surface, then field-line generation starts from \(Z\sim\mu_0\) on \(S_0\), follows \(\dot y_s=\F(y_s)\), and stops when the trajectory hits \(S_1\). The hitting map \(T_\Phi:S_0\to S_1\) then induces the coupling \((\Id,T_\Phi)_\#\mu_0\).

The time parameter used by a generative model is not necessarily the physical field-line time \(s\). A clock \(r_z:[0,1]\to[0,\tau(z)]\) creates the bridge path \(\gamma_z(t)=\varphi_{r_z(t)}(z)\). Different clocks preserve the same geometric field lines but change the velocity target \(u_t(z)=r'_z(t)\F(\gamma_z(t))\) and therefore change the Markovization gap. This is another bridge-design degree of freedom: two models can share the same potential field but induce different regression problems because they traverse the field at different speeds.

Singularities are unavoidable when data are represented as charges. The deterministic bridge is well-defined only away from singular sets and only for trajectories that hit the data surface transversally. The small-noise Doob bridge in Appendix~\ref{app:doob} regularizes this by replacing deterministic field lines with field-biased stochastic paths conditioned on endpoints. This yields a continuum between an electrostatic bridge and an entropy-regularized bridge, and it suggests a practical path for training: use small noise to avoid singular field-line crossings early in training, then anneal toward sharper field-line transport.

\section{Real-data latent benchmark details}
\label{app:realdata}

The real-data benchmarks use the same BGM components as the synthetic diagnostic but replace the target distribution with real-image latents. Images are flattened into pixels in $[0,1]$, randomly split into train/test with test fraction $0.25$ and seed $0$, standardized using the training-set mean and standard deviation, and embedded with PCA whitening fitted only on the training split. Target endpoint sets are sampled with replacement from the training latents; terminal generated samples are compared against held-out test latents. Minibatch OT uses the full sampled batch of size $n$ for each seed and solves the Hungarian assignment with squared Euclidean cost.

We report two kinds of real-data diagnostics. First, the lightweight ridge benchmark reports integrated Markovization gap, fixed-model velocity-regression error, and a latent two-sample metric. Ridge regression uses penalty $\lambda_{\mathrm{ridge}}=10^{-4}$ and features $(X_t,\,tX_t,\,t,\,t^2,\,\sin\pi t,\,\cos\pi t,\,1)$. The flag \texttt{--ridge-repeats 8} means that each endpoint pair is repeated with eight independent bridge times for fitting; the reported ridge eval MSE uses the same endpoint pairs with fresh times, not a held-out endpoint split. Latent MMD$^2$ is the biased RBF MMD with bandwidth chosen by the median pairwise-distance heuristic on the union of generated and held-out target latents. Second, the end-to-end benchmark trains a neural MLP velocity decoder, integrates its ODE from fresh base samples, and evaluates terminal latent samples. The exact commands used for the LFW ridge subset are:
\begin{verbatim}
PYTHONPATH=src python -B scripts/run_realdata_gap.py \
  --dataset lfw --n 150 --seeds 0 1 2 --k-neighbors 16
PYTHONPATH=src python -B scripts/run_realdata_benchmark.py \
  --dataset lfw --n 150 --seeds 0 1 2 --k-neighbors 16 --ridge-repeats 8
\end{verbatim}
For the end-to-end MLP benchmark, the velocity eval MSE is computed on the training endpoint pairs with fresh bridge times after 500 optimization steps; terminal MMD$^2$ and 1NN two-sample accuracy use samples generated from fresh standard-normal base points and compare to held-out target latents. The end-to-end results in Table~\ref{tab:end_to_end_benchmark} were generated with:
\begin{verbatim}
PYTHONPATH=src python -B scripts/run_end_to_end_compressibility.py \
  --dataset digits --n 500 --eval-n 500 --seeds 0 1 2 \
  --steps 500 --batch-size 256 --hidden-dim 96 --lr 0.001 \
  --k-neighbors 24 --ode-steps 16 32 64
PYTHONPATH=src python -B scripts/run_end_to_end_compressibility.py \
  --dataset lfw --n 150 --eval-n 150 --seeds 0 1 2 \
  --steps 500 --batch-size 128 --hidden-dim 96 --lr 0.001 \
  --k-neighbors 16 --ode-steps 16 32 64
\end{verbatim}
The same real-data scripts accept \texttt{--dataset mnist}, \texttt{--dataset fashionmnist}, and \texttt{--dataset cifar10}; those runs require local torchvision data download. The benchmarks are deliberately latent and low-compute. They evaluate the BGM compressibility diagnostic and a small Markov decoder, not image-generation state of the art.

\begin{figure}[h]
\centering
\begin{minipage}[t]{0.48\linewidth}
\centering
\includegraphics[width=\linewidth]{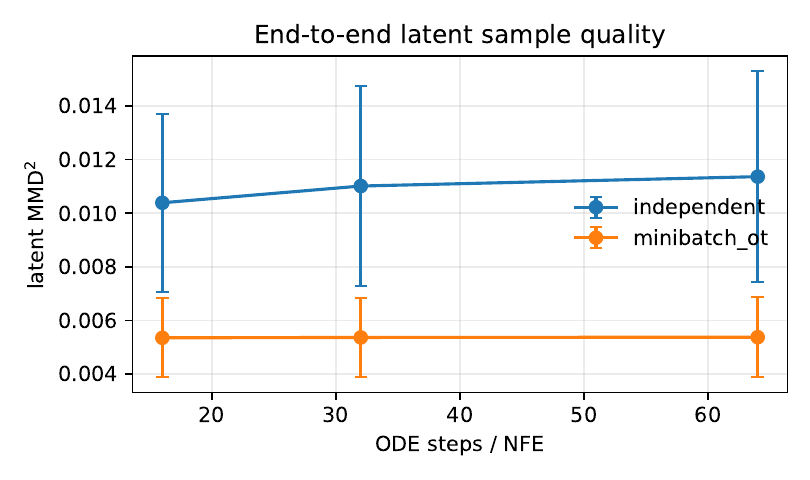}
\end{minipage}\hfill
\begin{minipage}[t]{0.48\linewidth}
\centering
\includegraphics[width=\linewidth]{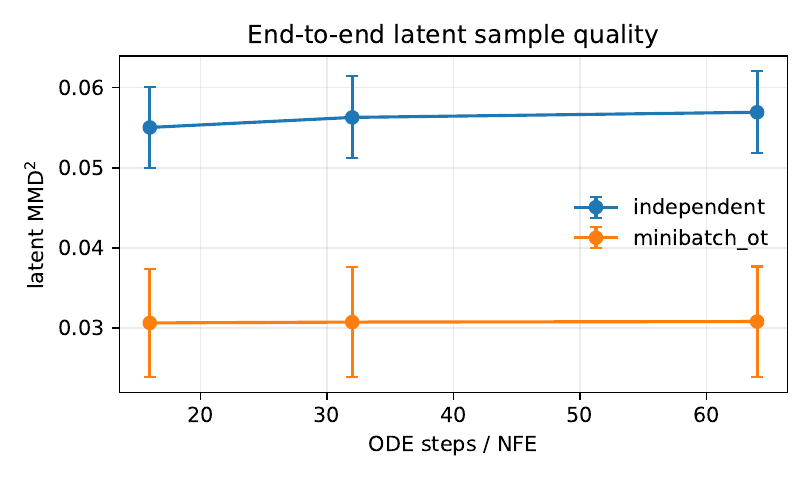}
\end{minipage}
\caption{Terminal latent MMD$^2$ versus Euler steps for the end-to-end Markov decoder benchmark. The trend is mostly insensitive to NFE at this small scale; the coupling effect dominates the discretization effect.}
\label{fig:end_to_end_mmd}
\end{figure}

\section{CIFAR-10 proxy-gap pilot details}
\label{app:cifar_proxy}

The CIFAR-10 pilot is designed to test the diagnostic claim rather than to compete with full-scale image generators. The proxy gap in Table~\ref{tab:2x2_gap} is estimated before neural training in a low-dimensional linear feature space: images are downsampled to $8\times8$ features, projected to 64 PCA dimensions, paired with standard-normal base samples, and evaluated with the same nearest-neighbor conditional-variance plug-in estimator used in the lower-dimensional diagnostics. The reported configuration uses $n=1024$, $k=32$, seeds $0,1,2$, 19 time points in $[0.05,0.95]$, minibatch OT batch size 128, and Brownian variance parameter $\epsilon=0.5$. The Brownian rows use the same feature-space proxy estimator with Brownian bridge perturbations and are included only to test whether the proxy ranking changes under a second bridge law; no Brownian image model was trained.

The straight-bridge CIFAR rows reuse the 20k-step pixel-space flow-matching pilot. Both couplings use the same CIFAR-10 training data, compact U-Net velocity model, optimizer, batch size, Euler ODE sampler, and 10k generated images for FID/KID/Inception-score evaluation. The final training loss is the last logged flow-matching loss, and FID is computed with torch-fidelity against 10k reference CIFAR-10 images. The corresponding commands are in the anonymized supplementary package; the table-generation utilities write \texttt{bridge\_coupling\_2x2\_results.json}, the diagnostic-only 2$\times$2 proxy table, and the separate straight-bridge downstream pilot table. Because only two straight-bridge configurations have downstream image metrics, we report rank agreement rather than a formal correlation coefficient. The robustness table in Appendix~\ref{app:additional_diagnostics} recomputes the proxy under PCA32/PCA64 and $k\in\{16,32,64\}$; the ranking is stable in all tested configurations.

\paragraph{Fashion-MNIST pixel-space pilot.}
The Fashion-MNIST rows of Table~\ref{tab:pixel_pilots} use the same straight-bridge flow-matching setup and evaluation protocol as the CIFAR-10 straight-bridge pilot, with 20k training steps per coupling and 10k generated/reference images for FID/KID/Inception-score evaluation. The reported train loss is the mean of the last 100 logged losses. The pre-training proxy gap is again used only for within-dataset ranking; its absolute scale is not comparable to CIFAR-10 because the feature map and data scale differ. The diagnostic took about three minutes on CPU, while the two downstream training/evaluation runs took about 30 minutes on GPU.

\section{Ethics, existing assets, new assets, and compute}
\label{app:ethics_assets}

\paragraph{Existing assets.}
The reported real-data experiments use scikit-learn handwritten digits, the LFW face subset~\citep{huang2007lfw} distributed through scikit-image~\citep{vanderwalt2014skimage}, Fashion-MNIST~\citep{xiao2017fashionmnist}, and CIFAR-10~\citep{krizhevsky2009learning}. LFW is embedded only into low-dimensional PCA latents for diagnostic regression and latent ODE sampling tasks; it is not used for identification, recognition, or deployment. CIFAR-10 and Fashion-MNIST are used for controlled proxy-gap and pilot flow-matching experiments. The software dependencies include NumPy, SciPy, scikit-learn~\citep{pedregosa2011sklearn}, scikit-image~\citep{vanderwalt2014skimage}, Matplotlib, PyTorch, torchvision, and torch-fidelity for optional image metrics. Because LFW contains public web photographs whose image copyrights remain with original rights holders, the source package does not redistribute raw LFW images. It also does not redistribute raw MNIST, Fashion-MNIST, or CIFAR-10 datasets; optional scripts download them through the user's local environment when explicitly requested. The released code and generated diagnostic outputs are provided under the repository license.

\paragraph{New assets.}
The new assets are the anonymized source code, LaTeX source, generated diagnostic figures, CSV/JSON summaries, and optional small sample grids. The released code is intended to reproduce the Markovization-gap, latent-regression, small end-to-end latent ODE diagnostics, and CIFAR-10/Fashion-MNIST proxy-gap pilots; no large pretrained generator, large model checkpoint, or new dataset is released.

\paragraph{Compute.}
The synthetic and latent diagnostics are CPU-only. The synthetic diagnostic runs in seconds to minutes on a standard laptop-class CPU. The LFW ridge benchmark uses $n=150$, three seeds, a 16-dimensional PCA embedding, nearest-neighbor conditional-variance estimates, and a closed-form ridge model. The end-to-end latent ODE experiments use three seeds, 500 MLP training steps per coupling, and small batches ($n=500$ for digits and $n=150$ for LFW); they also run without a GPU. The CIFAR-10 proxy-gap estimates in Table~\ref{tab:2x2_gap} run on CPU in roughly nine minutes per configuration, and the Fashion-MNIST proxy estimate in Table~\ref{tab:pixel_pilots} takes about three minutes on CPU. The CIFAR-10 and Fashion-MNIST pilot trainings each use one GPU for 20k steps per coupling and 10k generated images for FID/KID/Inception-score evaluation; these are pilot-scale runs rather than full high-compute benchmarks.

\paragraph{Broader impacts and safeguards.}
The positive impact is conceptual and methodological: the framework may help researchers compare bridge, coupling, and dynamics-representation choices before expensive generative training, potentially reducing compute waste and making design decisions more interpretable. The main negative impact is indirect: improved generative modeling methods can contribute to dual-use image generation, including misleading synthetic media, impersonation, and non-consensual image manipulation, if later scaled into high-fidelity generators. This paper does not release a deployable generator or pretrained model; the reproducible release is limited to diagnostics, proof code, and low-dimensional latent experiments. If the framework is later used in high-fidelity generative systems, the appropriate safeguards would be inherited from that application context, including dataset governance, consent-aware data curation, watermarking or provenance tools where applicable, and misuse evaluation for synthetic-media generation.

\section{Suggested NeurIPS ablation plan}
\label{app:future_ablations}

The present submission is intentionally Concept \& Feasibility, but the framework suggests a direct ablation ladder. The first ablation fixes the bridge to be straight and varies only the coupling: independent, minibatch OT, entropic OT, and encoder-induced coupling. The second fixes the coupling and varies the bridge: straight deterministic, Brownian, small-noise field-biased, and Schr\"odinger-like. The third fixes the learned current and varies the dynamics representation: ODE, isotropic SDE with score correction, and augmented field. The fourth learns \(q_\phi(z\mid x)\) jointly with a latent bridge, testing whether the endpoint encoder lowers the estimated Markovization gap and improves downstream sample quality.

The key experimental prediction is not merely that OT or learned couplings produce nicer paths. The end-to-end latent benchmark in Section~\ref{sec:toy} tests the first step of this prediction: lower estimated Markovization gap correlates with lower neural velocity error and lower terminal latent MMD under controlled architecture and compute. The CIFAR-10 and Fashion-MNIST pilots add a second step: a proxy gap estimated before training ranks coupling choices across bridge laws and datasets, and the lower-gap rows have lower training loss and lower pilot FID under fixed budgets, with KID improving on Fashion-MNIST and remaining inconclusive on CIFAR-10. The next falsification test is stronger: if two bridge choices have very different \(\mathfrak G\) but identical convergence speed, solver cost, and image-quality metrics under controlled architectures at larger scale, then the diagnostic is incomplete. Conversely, if \(\mathfrak G\) predicts optimization difficulty across datasets and architectures, it becomes a useful model-selection tool before expensive generative training.

\clearpage
\section*{NeurIPS Paper Checklist}

The checklist is designed to encourage best practices for responsible machine learning research, addressing issues of reproducibility, transparency, research ethics, and societal impact. Do not remove the checklist: {\bf The papers not including the checklist will be desk rejected.} The checklist should follow the references and follow the (optional) supplemental material. The checklist does NOT count towards the page
limit.

Please read the checklist guidelines carefully for information on how to answer these questions. For each question in the checklist:
\begin{itemize}
    \item You should answer \answerYes{}, \answerNo{}, or \answerNA{}.
    \item \answerNA{} means either that the question is Not Applicable for that particular paper or the relevant information is Not Available.
    \item Please provide a short (1--2 sentence) justification right after your answer (even for \answerNA).
\end{itemize}

{\bf The checklist answers are an integral part of your paper submission.} They are visible to the reviewers, area chairs, senior area chairs, and ethics reviewers. You will also be asked to include it (after eventual revisions) with the final version of your paper, and its final version will be published with the paper.

The reviewers of your paper will be asked to use the checklist as one of the factors in their evaluation. While \answerYes{} is generally preferable to \answerNo{}, it is perfectly acceptable to answer \answerNo{} provided a proper justification is given (e.g., ``error bars are not reported because it would be too computationally expensive'' or ``we were unable to find the license for the dataset we used''). In general, answering \answerNo{} or \answerNA{} is not grounds for rejection. While the questions are phrased in a binary way, we acknowledge that the true answer is often more nuanced, so please just use your best judgment and write a justification to elaborate. All supporting evidence can appear either in the main paper or the supplemental material, provided in appendix. If you answer \answerYes{} to a question, in the justification please point to the section(s) where related material for the question can be found.


\begin{enumerate}

\item {\bf Claims}
    \item[] Question: Do the main claims made in the abstract and introduction accurately reflect the paper's contributions and scope?
    \item[] Answer: \answerYes{} 
    \item[] Justification: The abstract and Introduction state that this is a Concept \& Feasibility framework with proof-level diagnostics, synthetic and latent end-to-end tests, and controlled CIFAR-10/Fashion-MNIST pixel-space pilots, not a state-of-the-art image-generation benchmark. The main claims are supported by Sections~\ref{sec:bgm}--\ref{sec:toy} and qualified in Section~\ref{sec:limitations}.
    \item[] Guidelines:
    \begin{itemize}
        \item The answer \answerNA{} means that the abstract and introduction do not include the claims made in the paper.
        \item The abstract and/or introduction should clearly state the claims made, including the contributions made in the paper and important assumptions and limitations. A \answerNo{} or \answerNA{} answer to this question will not be perceived well by the reviewers.
        \item The claims made should match theoretical and experimental results, and reflect how much the results can be expected to generalize to other settings.
        \item It is fine to include aspirational goals as motivation as long as it is clear that these goals are not attained by the paper.
    \end{itemize}

\item {\bf Limitations}
    \item[] Question: Does the paper discuss the limitations of the work performed by the authors?
    \item[] Answer: \answerYes{} 
    \item[] Justification: Section~\ref{sec:limitations} discusses the limited empirical scale, assumptions in the stability theorem, high-dimensional estimation issues, and field-line regularity assumptions. Appendices~\ref{app:future_ablations} and~\ref{app:ethics_assets} further delimit what the current experiments and release do not claim.
    \item[] Guidelines:
    \begin{itemize}
        \item The answer \answerNA{} means that the paper has no limitation while the answer \answerNo{} means that the paper has limitations, but those are not discussed in the paper.
        \item The authors are encouraged to create a separate ``Limitations'' section in their paper.
        \item The paper should point out any strong assumptions and how robust the results are to violations of these assumptions (e.g., independence assumptions, noiseless settings, model well-specification, asymptotic approximations only holding locally). The authors should reflect on how these assumptions might be violated in practice and what the implications would be.
        \item The authors should reflect on the scope of the claims made, e.g., if the approach was only tested on a few datasets or with a few runs. In general, empirical results often depend on implicit assumptions, which should be articulated.
        \item The authors should reflect on the factors that influence the performance of the approach. For example, a facial recognition algorithm may perform poorly when image resolution is low or images are taken in low lighting. Or a speech-to-text system might not be used reliably to provide closed captions for online lectures because it fails to handle technical jargon.
        \item The authors should discuss the computational efficiency of the proposed algorithms and how they scale with dataset size.
        \item If applicable, the authors should discuss possible limitations of their approach to address problems of privacy and fairness.
        \item While the authors might fear that complete honesty about limitations might be used by reviewers as grounds for rejection, a worse outcome might be that reviewers discover limitations that aren't acknowledged in the paper. The authors should use their best judgment and recognize that individual actions in favor of transparency play an important role in developing norms that preserve the integrity of the community. Reviewers will be specifically instructed to not penalize honesty concerning limitations.
    \end{itemize}

\item {\bf Theory assumptions and proofs}
    \item[] Question: For each theoretical result, does the paper provide the full set of assumptions and a complete (and correct) proof?
    \item[] Answer: \answerYes{} 
    \item[] Justification: The theorem and proposition statements in Sections~\ref{sec:markovization},~\ref{sec:dynamics_representation}, and~\ref{sec:field} state their assumptions, with proof details in Appendices~\ref{app:setup}--\ref{app:doob}. Appendix~\ref{app:stability} gives the stability proof and Appendix~\ref{app:gaussian_gap} gives the closed-form Gaussian gap results.
    \item[] Guidelines:
    \begin{itemize}
        \item The answer \answerNA{} means that the paper does not include theoretical results.
        \item All the theorems, formulas, and proofs in the paper should be numbered and cross-referenced.
        \item All assumptions should be clearly stated or referenced in the statement of any theorems.
        \item The proofs can either appear in the main paper or the supplemental material, but if they appear in the supplemental material, the authors are encouraged to provide a short proof sketch to provide intuition.
        \item Inversely, any informal proof provided in the core of the paper should be complemented by formal proofs provided in appendix or supplemental material.
        \item Theorems and Lemmas that the proof relies upon should be properly referenced.
    \end{itemize}

    \item {\bf Experimental result reproducibility}
    \item[] Question: Does the paper fully disclose all the information needed to reproduce the main experimental results of the paper to the extent that it affects the main claims and/or conclusions of the paper (regardless of whether the code and data are provided or not)?
    \item[] Answer: \answerYes{} 
    \item[] Justification: Section~\ref{sec:toy} describes the synthetic diagnostics, real-data latent end-to-end experiments, and CIFAR-10/Fashion-MNIST pixel-space pilots, while Appendices~\ref{app:experiments},~\ref{app:additional_diagnostics},~\ref{app:realdata}, and~\ref{app:cifar_proxy} give distributions, sample sizes, seeds, estimators, commands, and evaluation metrics. Appendix~\ref{app:code_reproducibility} states the scripts needed to reproduce figures and tables.
    \item[] Guidelines:
    \begin{itemize}
        \item The answer \answerNA{} means that the paper does not include experiments.
        \item If the paper includes experiments, a \answerNo{} answer to this question will not be perceived well by the reviewers: Making the paper reproducible is important, regardless of whether the code and data are provided or not.
        \item If the contribution is a dataset and\slash or model, the authors should describe the steps taken to make their results reproducible or verifiable.
        \item Depending on the contribution, reproducibility can be accomplished in various ways. For example, if the contribution is a novel architecture, describing the architecture fully might suffice, or if the contribution is a specific model and empirical evaluation, it may be necessary to either make it possible for others to replicate the model with the same dataset, or provide access to the model. In general. releasing code and data is often one good way to accomplish this, but reproducibility can also be provided via detailed instructions for how to replicate the results, access to a hosted model (e.g., in the case of a large language model), releasing of a model checkpoint, or other means that are appropriate to the research performed.
        \item While NeurIPS does not require releasing code, the conference does require all submissions to provide some reasonable avenue for reproducibility, which may depend on the nature of the contribution. For example
        \begin{enumerate}
            \item If the contribution is primarily a new algorithm, the paper should make it clear how to reproduce that algorithm.
            \item If the contribution is primarily a new model architecture, the paper should describe the architecture clearly and fully.
            \item If the contribution is a new model (e.g., a large language model), then there should either be a way to access this model for reproducing the results or a way to reproduce the model (e.g., with an open-source dataset or instructions for how to construct the dataset).
            \item We recognize that reproducibility may be tricky in some cases, in which case authors are welcome to describe the particular way they provide for reproducibility. In the case of closed-source models, it may be that access to the model is limited in some way (e.g., to registered users), but it should be possible for other researchers to have some path to reproducing or verifying the results.
        \end{enumerate}
    \end{itemize}

\item {\bf Open access to data and code}
    \item[] Question: Does the paper provide open access to the data and code, with sufficient instructions to faithfully reproduce the main experimental results, as described in supplemental material?
    \item[] Answer:\answerYes{} 
    \item[] Justification: Appendices~\ref{app:code_reproducibility},~\ref{app:realdata}, and~\ref{app:cifar_proxy} provide exact commands and protocols for reproducing the diagnostic and pixel-space pilot results, and the supplementary repository contains the scripts and generated CSV/PDF/JSON outputs. Appendix~\ref{app:ethics_assets} states that raw external datasets are not redistributed; optional real-data scripts download data locally when explicitly requested.
    \item[] Guidelines:
    \begin{itemize}
        \item The answer \answerNA{} means that paper does not include experiments requiring code.
        \item Please see the NeurIPS code and data submission guidelines (\url{https://neurips.cc/public/guides/CodeSubmissionPolicy}) for more details.
        \item While we encourage the release of code and data, we understand that this might not be possible, so \answerNo{} is an acceptable answer. Papers cannot be rejected simply for not including code, unless this is central to the contribution (e.g., for a new open-source benchmark).
        \item The instructions should contain the exact command and environment needed to run to reproduce the results. See the NeurIPS code and data submission guidelines (\url{https://neurips.cc/public/guides/CodeSubmissionPolicy}) for more details.
        \item The authors should provide instructions on data access and preparation, including how to access the raw data, preprocessed data, intermediate data, and generated data, etc.
        \item The authors should provide scripts to reproduce all experimental results for the new proposed method and baselines. If only a subset of experiments are reproducible, they should state which ones are omitted from the script and why.
        \item At submission time, to preserve anonymity, the authors should release anonymized versions (if applicable).
        \item Providing as much information as possible in supplemental material (appended to the paper) is recommended, but including URLs to data and code is permitted.
    \end{itemize}

\item {\bf Experimental setting/details}
    \item[] Question: Does the paper specify all the training and test details (e.g., data splits, hyperparameters, how they were chosen, type of optimizer) necessary to understand the results?
    \item[] Answer: \answerYes{} 
    \item[] Justification: Section~\ref{sec:toy} specifies the bridge, couplings, latent dimension, PCA whitening, pixel-space pilots, fixed models, and reported metrics. Appendices~\ref{app:experiments},~\ref{app:realdata},~\ref{app:additional_diagnostics}, and~\ref{app:cifar_proxy} give the time grid, nearest-neighbor estimator, seeds, sample sizes, training steps, evaluation protocol, and sensitivity settings.
    \item[] Guidelines:
    \begin{itemize}
        \item The answer \answerNA{} means that the paper does not include experiments.
        \item The experimental setting should be presented in the core of the paper to a level of detail that is necessary to appreciate the results and make sense of them.
        \item The full details can be provided either with the code, in appendix, or as supplemental material.
    \end{itemize}

\item {\bf Experiment statistical significance}
    \item[] Question: Does the paper report error bars suitably and correctly defined or other appropriate information about the statistical significance of the experiments?
    \item[] Answer: \answerYes{} 
    \item[] Justification: Figure~\ref{fig:toy} reports five-seed synthetic replications, Table~\ref{tab:end_to_end_benchmark} reports mean $\pm$ standard deviation over three real-data latent seeds, and the CIFAR-10/Fashion-MNIST pilot tables report controlled single-run pixel-space pilot metrics with clear scope limitations. Appendix~\ref{app:additional_diagnostics} gives sensitivity and robustness results for the proxy estimator.
    \item[] Guidelines:
    \begin{itemize}
        \item The answer \answerNA{} means that the paper does not include experiments.
        \item The authors should answer \answerYes{} if the results are accompanied by error bars, confidence intervals, or statistical significance tests, at least for the experiments that support the main claims of the paper.
        \item The factors of variability that the error bars are capturing should be clearly stated (for example, train/test split, initialization, random drawing of some parameter, or overall run with given experimental conditions).
        \item The method for calculating the error bars should be explained (closed form formula, call to a library function, bootstrap, etc.)
        \item The assumptions made should be given (e.g., Normally distributed errors).
        \item It should be clear whether the error bar is the standard deviation or the standard error of the mean.
        \item It is OK to report 1-sigma error bars, but one should state it. The authors should preferably report a 2-sigma error bar than state that they have a 96\% CI, if the hypothesis of Normality of errors is not verified.
        \item For asymmetric distributions, the authors should be careful not to show in tables or figures symmetric error bars that would yield results that are out of range (e.g., negative error rates).
        \item If error bars are reported in tables or plots, the authors should explain in the text how they were calculated and reference the corresponding figures or tables in the text.
    \end{itemize}

\item {\bf Experiments compute resources}
    \item[] Question: For each experiment, does the paper provide sufficient information on the computer resources (type of compute workers, memory, time of execution) needed to reproduce the experiments?
    \item[] Answer: \answerYes{} 
    \item[] Justification: Appendix~\ref{app:ethics_assets} gives compute details: synthetic and latent diagnostics are CPU-only, CIFAR-10 proxy estimates take roughly nine minutes per configuration on CPU, Fashion-MNIST proxy estimates take about three minutes on CPU, and the CIFAR-10/Fashion-MNIST pixel-space pilots use one GPU for 20k training steps per coupling plus 10k-sample FID/KID/Inception-score evaluation. Appendix~\ref{app:code_reproducibility} describes the scripts needed to reproduce these runs.
    \item[] Guidelines:
    \begin{itemize}
        \item The answer \answerNA{} means that the paper does not include experiments.
        \item The paper should indicate the type of compute workers CPU or GPU, internal cluster, or cloud provider, including relevant memory and storage.
        \item The paper should provide the amount of compute required for each of the individual experimental runs as well as estimate the total compute.
        \item The paper should disclose whether the full research project required more compute than the experiments reported in the paper (e.g., preliminary or failed experiments that didn't make it into the paper).
    \end{itemize}

\item {\bf Code of ethics}
    \item[] Question: Does the research conducted in the paper conform, in every respect, with the NeurIPS Code of Ethics \url{https://neurips.cc/public/EthicsGuidelines}?
    \item[] Answer: \answerYes{} 
    \item[] Justification: The work is a theoretical and diagnostic study, does not release a deployable generator, and does not conduct new human-subject or crowdsourcing studies. Appendix~\ref{app:ethics_assets} describes the use of public image assets, non-identification latent processing, and the absence of model deployment.
    \item[] Guidelines:
    \begin{itemize}
        \item The answer \answerNA{} means that the authors have not reviewed the NeurIPS Code of Ethics.
        \item If the authors answer \answerNo, they should explain the special circumstances that require a deviation from the Code of Ethics.
        \item The authors should make sure to preserve anonymity (e.g., if there is a special consideration due to laws or regulations in their jurisdiction).
    \end{itemize}

\item {\bf Broader impacts}
    \item[] Question: Does the paper discuss both potential positive societal impacts and negative societal impacts of the work performed?
    \item[] Answer:  \answerYes{}
    \item[] Justification: Appendix~\ref{app:ethics_assets} discusses the positive methodological impact of more interpretable and efficient bridge selection, and the indirect negative dual-use risk that improved generative modeling can contribute to misleading synthetic media if later scaled. Section~\ref{sec:limitations} also limits the current claims to diagnostics rather than deployment.
    \item[] Guidelines:
    \begin{itemize}
        \item The answer \answerNA{} means that there is no societal impact of the work performed.
        \item If the authors answer \answerNA{} or \answerNo, they should explain why their work has no societal impact or why the paper does not address societal impact.
        \item Examples of negative societal impacts include potential malicious or unintended uses (e.g., disinformation, generating fake profiles, surveillance), fairness considerations (e.g., deployment of technologies that could make decisions that unfairly impact specific groups), privacy considerations, and security considerations.
        \item The conference expects that many papers will be foundational research and not tied to particular applications, let alone deployments. However, if there is a direct path to any negative applications, the authors should point it out. For example, it is legitimate to point out that an improvement in the quality of generative models could be used to generate Deepfakes for disinformation. On the other hand, it is not needed to point out that a generic algorithm for optimizing neural networks could enable people to train models that generate Deepfakes faster.
        \item The authors should consider possible harms that could arise when the technology is being used as intended and functioning correctly, harms that could arise when the technology is being used as intended but gives incorrect results, and harms following from (intentional or unintentional) misuse of the technology.
        \item If there are negative societal impacts, the authors could also discuss possible mitigation strategies (e.g., gated release of models, providing defenses in addition to attacks, mechanisms for monitoring misuse, mechanisms to monitor how a system learns from feedback over time, improving the efficiency and accessibility of ML).
    \end{itemize}

\item {\bf Safeguards}
    \item[] Question: Does the paper describe safeguards that have been put in place for responsible release of data or models that have a high risk for misuse (e.g., pre-trained language models, image generators, or scraped datasets)?
    \item[] Answer: \answerNA{} 
    \item[] Justification: Appendix~\ref{app:ethics_assets} states that the release does not include a trained image generator, large model checkpoint, or new scraped dataset. The released assets are diagnostic code, generated CSVs/figures, and LaTeX source, so high-risk model-release safeguards are not applicable.
    \item[] Guidelines:
    \begin{itemize}
        \item The answer \answerNA{} means that the paper poses no such risks.
        \item Released models that have a high risk for misuse or dual-use should be released with necessary safeguards to allow for controlled use of the model, for example by requiring that users adhere to usage guidelines or restrictions to access the model or implementing safety filters.
        \item Datasets that have been scraped from the Internet could pose safety risks. The authors should describe how they avoided releasing unsafe images.
        \item We recognize that providing effective safeguards is challenging, and many papers do not require this, but we encourage authors to take this into account and make a best faith effort.
    \end{itemize}

\item {\bf Licenses for existing assets}
    \item[] Question: Are the creators or original owners of assets (e.g., code, data, models), used in the paper, properly credited and are the license and terms of use explicitly mentioned and properly respected?
    \item[] Answer: \answerNo{} 
    \item[] Justification: Appendix~\ref{app:ethics_assets} credits the LFW subset, Fashion-MNIST, CIFAR-10, and software packages, and the source package does not redistribute raw datasets. We answer \answerNo{} rather than \answerYes{} because the copyright status of individual LFW web photographs is not fully explicit; this should be verified or the default benchmark should be switched to a clearer-license dataset before final submission.
    \item[] Guidelines:
    \begin{itemize}
        \item The answer \answerNA{} means that the paper does not use existing assets.
        \item The authors should cite the original paper that produced the code package or dataset.
        \item The authors should state which version of the asset is used and, if possible, include a URL.
        \item The name of the license (e.g., CC-BY 4.0) should be included for each asset.
        \item For scraped data from a particular source (e.g., website), the copyright and terms of service of that source should be provided.
        \item If assets are released, the license, copyright information, and terms of use in the package should be provided. For popular datasets, \url{paperswithcode.com/datasets} has curated licenses for some datasets. Their licensing guide can help determine the license of a dataset.
        \item For existing datasets that are re-packaged, both the original license and the license of the derived asset (if it has changed) should be provided.
        \item If this information is not available online, the authors are encouraged to reach out to the asset's creators.
    \end{itemize}

\item {\bf New assets}
    \item[] Question: Are new assets introduced in the paper well documented and is the documentation provided alongside the assets?
    \item[] Answer: \answerYes{} 
    \item[] Justification: Appendices~\ref{app:code_reproducibility} and~\ref{app:ethics_assets} document the new code, generated figures, generated CSVs, intended use, and release scope. The supplementary README gives compile and rerun commands, and no new dataset, large pretrained generator, or large model checkpoint is introduced.
    \item[] Guidelines:
    \begin{itemize}
        \item The answer \answerNA{} means that the paper does not release new assets.
        \item Researchers should communicate the details of the dataset\slash code\slash model as part of their submissions via structured templates. This includes details about training, license, limitations, etc.
        \item The paper should discuss whether and how consent was obtained from people whose asset is used.
        \item At submission time, remember to anonymize your assets (if applicable). You can either create an anonymized URL or include an anonymized zip file.
    \end{itemize}

\item {\bf Crowdsourcing and research with human subjects}
    \item[] Question: For crowdsourcing experiments and research with human subjects, does the paper include the full text of instructions given to participants and screenshots, if applicable, as well as details about compensation (if any)?
    \item[] Answer:  \answerNA{}
    \item[] Justification: The paper does not conduct crowdsourcing, recruit participants, or collect new human-subject data. Sections~\ref{sec:toy} and Appendices~\ref{app:realdata} and~\ref{app:cifar_proxy} use synthetic data, scikit-learn digits, a public LFW subset for non-identification latent diagnostics, and standard Fashion-MNIST/CIFAR-10 image benchmarks for controlled pilots.
    \item[] Guidelines:
    \begin{itemize}
        \item The answer \answerNA{} means that the paper does not involve crowdsourcing nor research with human subjects.
        \item Including this information in the supplemental material is fine, but if the main contribution of the paper involves human subjects, then as much detail as possible should be included in the main paper.
        \item According to the NeurIPS Code of Ethics, workers involved in data collection, curation, or other labor should be paid at least the minimum wage in the country of the data collector.
    \end{itemize}

\item {\bf Institutional review board (IRB) approvals or equivalent for research with human subjects}
    \item[] Question: Does the paper describe potential risks incurred by study participants, whether such risks were disclosed to the subjects, and whether Institutional Review Board (IRB) approvals (or an equivalent approval/review based on the requirements of your country or institution) were obtained?
    \item[] Answer: \answerNA{} 
    \item[] Justification: The paper does not recruit participants, interact with people, or collect new human-subject data; Appendix~\ref{app:ethics_assets} describes the public-data-only release scope. Therefore IRB approval or equivalent review is not applicable to the reported experiments.
    \item[] Guidelines:
    \begin{itemize}
        \item The answer \answerNA{} means that the paper does not involve crowdsourcing nor research with human subjects.
        \item Depending on the country in which research is conducted, IRB approval (or equivalent) may be required for any human subjects research. If you obtained IRB approval, you should clearly state this in the paper.
        \item We recognize that the procedures for this may vary significantly between institutions and locations, and we expect authors to adhere to the NeurIPS Code of Ethics and the guidelines for their institution.
        \item For initial submissions, do not include any information that would break anonymity (if applicable), such as the institution conducting the review.
    \end{itemize}

\item {\bf Declaration of LLM usage}
    \item[] Question: Does the paper describe the usage of LLMs if it is an important, original, or non-standard component of the core methods in this research? Note that if the LLM is used only for writing, editing, or formatting purposes and does \emph{not} impact the core methodology, scientific rigor, or originality of the research, declaration is not required.
    \item[] Answer: \answerNA{} 
    \item[] Justification: No LLM is used as part of the model, theorem, experiment, data pipeline, or evaluation method; the method is described in Sections~\ref{sec:bgm}--\ref{sec:field} and the experiments in Section~\ref{sec:toy}. If authors used language assistance only for writing, editing, or formatting, that is outside the core methodology under the checklist wording.
    \item[] Guidelines:
    \begin{itemize}
        \item The answer \answerNA{} means that the core method development in this research does not involve LLMs as any important, original, or non-standard components.
        \item Please refer to our LLM policy in the NeurIPS handbook for what should or should not be described.
    \end{itemize}
\end{enumerate}

\end{document}